\pgfplotsset{compat=1.17}
\DeclareMathOperator*{\argmin}{arg\,min}
\newtheorem{theorem}{Theorem}
\newtheorem{lemma}{Lemma}
\newtheorem{remark}{Remark}
\newtheorem{corollary}{Corollary}
\newcommand{\lec}[2]{\stackrel{\text{#1}}{#2}}
\title{Adversarial Robustness of Nonparametric Regression}
\author{%
  Parsa Moradi\thanks{Equal contribution.} \\
  % Department of Electrical and Computer Engineering\\
  University of Minnesota\\
  \texttt{moradi@umn.edu} \\
  % examples of more authors
  \And
  Hanzaleh Akbarinodehi\footnotemark[1]  \\
  University of Minnesota \\
  % Address \\
  \texttt{akbar066@umn.edu} \\
  \And
  Mohammad Ali Maddah-Ali \\
  University of Minnesota \\
  % Address \\
  \texttt{maddah@umn.edu} \\
  % \And
  % Coauthor \\
  % Affiliation \\
  % Address \\
  % \texttt{email} \\
  % \And
  % Coauthor \\
  % Affiliation \\
  % Address \\
  % \texttt{email} \\
}
\begin{document}

\maketitle

\begin{abstract}
In this paper, we investigate the adversarial robustness of nonparametric regression, a fundamental problem in machine learning, under the setting where an adversary can arbitrarily corrupt a subset of the input data. While the robustness of parametric regression has been extensively studied, its nonparametric counterpart remains largely unexplored. We characterize the adversarial robustness in nonparametric regression, assuming the regression function belongs to the second-order Sobolev space (i.e., it is square integrable up to its second derivative). 

The contribution of this paper is two-fold: (i) we establish a minimax lower bound on the estimation error, revealing a fundamental limit that no estimator can overcome, and (ii) we show that, perhaps surprisingly, the classical smoothing spline estimator, when properly regularized, exhibits robustness against adversarial corruption. These results imply that if $o(n)$ out of $n$ samples are corrupted, the estimation error of the smoothing spline vanishes as $n \to \infty$. On the other hand, when a constant fraction of the data is corrupted, no estimator can guarantee vanishing estimation error, implying the optimality of the smoothing spline in terms of maximum tolerable number of corrupted samples.
\end{abstract}

\section{Introduction}\label{sec:intro}
% \input{NeurIPS 2025/introductionv2}

% With the growing use of machine learning (ML) in everyday life, 
In recent years, machine learning (ML) models have increasingly relied on data from diverse sources and are often deployed in distributed or decentralized computing environments~\cite{dean2012large,mcmahan2017communication,you2018imagenet,konevcny2015federated,shoeybi2019megatron,bonawitz2017practical,nodehi2024game_sybil}. These settings introduce new attack surfaces and create incentives for adversaries to corrupt data or disrupt learning algorithms~\cite{xiong2024all,szegedy2013intriguing,goodfellow2014explaining, tian2022comprehensive}. This has motivated a growing body of research initiatives aimed at understanding and mitigating the impact of adversarial behavior
~\cite{carlini2017towards,shi2023adversarial,bai2021recent,van2018theory,sukhbaatar2014learning,madry2017towards,blanchard2017machine,yin2018byzantine,moradi2025general,nodehi2024game}.

One of the fundamental problems in ML is regression, which aims to estimate an unknown function \( f \) based on observed noisy data~\cite{hardle1993comparing}. This task is generally categorized into two approaches: \emph{parametric} regression, which assumes \( f \) is a parametric function with known structure, and \emph{nonparametric} regression, which makes minimal assumptions on \( f \), allowing it to belong to a wide class of functions such as Sobolev or Hölder spaces~\cite{hardle1990applied,tsybakov2009introduction}. Regression underpins many machine learning tasks, and understanding its robustness to adversarial corruption is a critical objective.

Adversarial robustness in parametric regression has been extensively studied~\cite{dan2020sharp, xing2021adversarially, cohen2019certified, rekavandi2024certified, xie2024high, ribeiro2023regularization, dobriban2023provable}. Many approaches leverage tools from classical robust statistics~\cite{huber2011robust, maronna2019robust}, adapting techniques like trimmed means, median-of-means, and \( M \)-estimators to modern high-dimensional settings~\cite{diakonikolas2019robust, diakonikolas2017being, cheng2019faster}. These methods benefit from the structural constraints of parametric models, which narrow the hypothesis space and simplify the alleviation of adversarial attacks. In contrast, robustness in nonparametric regression is considerably more challenging due to the absence of such structure, which makes the models more vulnerable to adversarial attacks \cite{ben2023robust, dhar2022trimmed, zhao2024robust, dohmatob2024consistent}.

In this work, we address the problem of nonparametric regression under adversarial corruption. We consider a setting in which one observes \( n \) pairs \( \{(x_i, \widetilde{y}_i)\}_{i=1}^n \), where the responses \( \widetilde{y}_i \) may be partially corrupted by an adversary. Specifically, the adversary arbitrarily choose $\widetilde{y}_i$, for all $i \in \mathcal{A}$, where $\mathcal{A}$ is an unknown subset of $\{1,\ldots, n\}$ with cardinality at most $q < n$. For each \( i \notin \mathcal{A} \), the observed response is 
\(\widetilde{y}_i = f(x_i) + \varepsilon_i\), 
where \( f \colon \Omega \to \mathbb{R} \) is the unknown regression function with domain \( \Omega \subset \mathbb{R} \), 
and \( \{\varepsilon_i\}_{i \in [n] \setminus \mathcal{A}} \) are i.i.d. noise variables with zero mean and variance at most \( \sigma^2 \).

In this paper, we assume that regression function \( f \) belongs to the second-order Sobolev space, consisting of functions that are square-integrable up to the second derivative over $\Omega$. The objective of non-parametric regression is to produce $\hat{f}$ as an estimation of $f$ based on \( \{(x_i, \widetilde{y}_i)\}_{i=1}^n \).
To evaluate the performance of \( \hat{f} \) in the presence of adversarial corruption, we use the following metrics \cite{tsybakov2009introduction}:
\begin{align*}
R_2(f, \hat{f}) &:= \mathbb{E}_{\boldsymbol{\varepsilon}} \!\left[ \sup_{\mathcal{S}} \| f - \hat{f} \|_{L_2(\Omega)}^2 \right], \qquad
R_\infty(f, \hat{f}) := \mathbb{E}_{\boldsymbol{\varepsilon}} \!\left[ \sup_{\mathcal{S}} \| f - \hat{f} \|_{L_\infty(\Omega)}^2 \right],
\end{align*}
where $\boldsymbol{\varepsilon} := [\epsilon_1, \dots, \epsilon_n]$ and $\mathcal{S}$ denotes the adversarial strategy that can corrupt up to $q$ samples. 
Our goal is to characterize 
\(\inf_{\hat{f}} R_2(f, \hat{f})\) and \(\inf_{\hat{f}} R_\infty(f, \hat{f})\) 
over all estimators, assuming \(f\) belongs to the second-order Sobolev space, under the setting where the adversary may corrupt up to \(q\) samples.

% \( \hat{f} \colon \{(x_i, \widetilde{y}_i)\}_{i=1}^n \mapsto \mathcal{F} \), where \( \mathcal{F} \) is the space of all possible estimation functions. 
% as
% \begin{align}
%     R_2^\star &:= \inf_{\hat{f} \in \mathcal{F}} R_2(f, \hat{f}) = \inf_{\hat{f} \in \mathcal{F}} \, \mathbb{E}_{\boldsymbol{\varepsilon}} \left[ \sup_{\mathcal{S}} \| f - \hat{f} \|_{L_2(\Omega)}^2 \right],  \\
%     R_\infty^\star &:= \inf_{\hat{f} \in \mathcal{F}} R_\infty(f, \hat{f}) = \inf_{\hat{f} \in \mathcal{F}} \, \mathbb{E}_{\boldsymbol{\varepsilon}} \left[ \sup_{\mathcal{S}} \| f - \hat{f} \|_{L_\infty(\Omega)}^2 \right].
% \end{align}
 % These quantities represent the best possible accuracy that any estimator can attain against worst-case adversarial corruption.

The contributions of this paper are two-fold: 
\begin{itemize}

\item {\bf A Computationally-Efficient Estimator (Theorem~\ref{thm:upper}):}
We prove that the classical smoothing spline estimator retains robustness against adversarial corruption. This estimator selects \( \hat{f} \) from the second-order Sobolev space, by minimizing the empirical error $\frac{1}{n}\sum_{i=1}^{n}(g(x_i)-\widetilde{y}_i)^2$, regularized by $\lambda \int \hat{f}''(x) ^2 \, dx$, where \( \lambda > 0 \) controls the level of smoothness~\cite{wahba1975smoothing}.
Smoothing splines are computationally efficient, with \( \mathcal{O}(n) \) complexity of fitting and evaluating,  leveraging B-spline basis functions~\cite{wahba1990spline, eilers1996flexible}, and have found wide applications in statistics and machine learning \cite{liu2024kan, zhao2022thin, hua2023learning, he2024adaptive}. Note that classical nonparametric methods are not necessarily adversarially robust. For instance, the Nadaraya–Watson (NW) estimator~\cite{nadaraya1964estimating} can be fragile even under a small number of adversarial corruptions~\cite{zhao2024robust}. It is therefore surprising that a computationally efficient nonparametric regression method, such as smoothing splines, also exhibits adversarial robustness. 

While smoothing splines have been extensively studied in non-adversarial settings~\cite{wahba1975smoothing, abramovich1999derivation, utreras1988convergence, ragozin1983error, silverman1984spline, messer1991comparison, messer1993new, nychka1995splines}, their robustness properties against adversarial corruption were not previously understood.
In this paper, we show that if the adversary corrupts at most \( q = o(n) \) samples and the regression function \( f \) belongs to a second-order Sobolev space, then the smoothing spline estimator achieves \( R_2 \to 0 \) and \( R_\infty \to 0 \) as \( n \to \infty \). 
This result further provides an upper bound on \( \inf_{\hat{f}} R_2(f, \hat{f}) \) and \( \inf_{\hat{f}} R_\infty(f, \hat{f}) \) as functions of \( n \) and \( q \).

 \item {\bf Minimax Lower-Bound (Theorem~\ref{thm:lower}):}
We derive minimax lower bounds on \( \inf_{\hat{f}} R_2(f, \hat{f}) \) and \( \inf_{\hat{f}} R_\infty(f, \hat{f}) \), expressed as functions of \( n \) and \( q \). These bounds characterize the fundamental limits of estimation accuracy: no estimator can achieve better rates over the second-order Sobolev space under adversarial corruption. 

% \pars{The core idea is to construct two functions, \( f_1 \) and \( f_2 \), in \( \mathcal{W}^2(\Omega) \), designed to be measurably different in their \( L_2 \) and \( L_\infty \) norms (see Figure~\ref{indistnguishible_fig}), yet rendered statistically indistinguishable after adversarial manipulation. By strategically corrupting up to \( q \) samples, the adversary can mask the differences between these functions, making it information-theoretically impossible for any estimator to reliably infer which function generated the data. Consequently, using minimax theorem, the estimation error must be at least proportional to the intrinsic separation between \( f_1 \) and \( f_2 \) in both norms.}

A key implication of this result is that when \( q = \Theta(n) \), no estimator can achieve vanishing error as \( n \to \infty \). This highlights that smoothing splines are not only computationally efficient but also  optimal in terms of the maximum number of tolerable adversarial corruptions (see Corollary \ref{cor:optimality of smoothing}).

\end{itemize}
To better understand the results of this paper, we examine their implications in the regime of large $n$. In this regime, our results can be concisely summarized in Figure~\ref{fig:rate-summary}. This figure illustrates the rate of convergence, defined as \( -\log_n R_2(f, \hat{f}) \) or respectively, \( -\log_n R_\infty(f, \hat{f}) \), as a function of \( q \), or equivalently,  \( \frac{\log q}{\log n} = \log_n q \), as $n \to \infty$.  The red curves represent the impossibility result, indicating that no estimator can achieve a convergence rate beyond this bound for all functions in second-order Sobolev space (Theorem~\ref{thm:lower}). The blue curve shows the convergence rate of the smoothing spline estimator in the presence of adversarial samples, as established in Theorem~\ref{thm:upper}. 

\begin{figure}[t]

\centering
\begin{subfigure}{0.48\textwidth}
\begin{tikzpicture}
\begin{axis}[
    axis lines=middle,
    xmin=0, xmax=1.05,
    ymin=0, ymax=1.05,
    xtick={2/5,11/15,1},
    ytick={4/5},
    xticklabels={$\frac{2}{5}$,$\frac{11}{15}$,$1$},
    yticklabels={$\frac{4}{5}$},
    xlabel={$\log_n q$},
    xlabel style={right},
    ylabel={$-\log_n R_2(f, \hat{f})$},
    width=6.5cm,
    height=5cm,
    clip=false,
    enlargelimits=false
]
\addplot[name path=blue1, blue, thick, domain=0:2/5] {4/5};
\addplot[name path=blue2, blue, thick, domain=2/5:1] {4/3*(1 - x)};
\addplot[name path=red1, red, thick, domain=0:11/15] {4/5};
\addplot[name path=red2, red, thick, domain=11/15:1] {3*(1 - x)};
\addplot [gray, dashed] coordinates {(2/5,0) (2/5,4/5)};
\addplot [gray, dashed] coordinates {(11/15,0) (11/15,4/5)};
\addplot[gray!30] fill between[of=red1 and blue2, soft clip={domain=2/5:11/15}];
\addplot[gray!30] fill between[of=red2 and blue2, soft clip={domain=11/15:1}];
\end{axis}
\end{tikzpicture}
\caption{Rate region for \( R_2 (f, \hat{f})\).}
\label{fig:rate-region_r2}
\end{subfigure}
\hfill
\begin{subfigure}{0.48\textwidth}
\centering
\begin{tikzpicture}
\begin{axis}[
    axis lines=middle,
    xmin=0, xmax=1.05,
    ymin=0, ymax=1.05,
    xtick={0.5,5/8,1},
    ytick={0.6, 0.75},
    xticklabels={$\frac{1}{2}$,$\frac{5}{8}$,$1$},
    yticklabels={$\frac{3}{5}$, $\frac{3}{4}$},
    xlabel={$\log_n q$},
    xlabel style={right},
    ylabel={$-\log_n R_\infty (f, \hat{f})$},
    width=6.5cm,
    height=5cm,
    clip=false,
    enlargelimits=false
]
\addplot[name path=blue1, blue, thick, domain=0:0.5] {0.6};
\addplot[name path=blue2, blue, thick, domain=0.5:5/8] {6/5*(1 - x)};
\addplot[name path=blue3, blue, thick, domain=5/8:1] {6/5*(1 - x)};
\addplot[name path=red1, red, thick, domain=0:5/8] {0.75};
\addplot[name path=red2, red, thick, domain=0.5:5/8] {0.75};
\addplot[name path=red3, red, thick, domain=5/8:1] {2*(1 - x)};
\addplot [gray, dashed] coordinates {(0.5,0) (0.5,0.6)};
\addplot [gray, dashed] coordinates {(5/8,0) (5/8,0.75)};
\addplot[gray!30] fill between[of=red1 and blue1, soft clip={domain=0:0.5}];
\addplot[gray!30] fill between[of=red2 and blue2, soft clip={domain=0.5:5/8}];
\addplot[gray!30] fill between[of=red3 and blue3, soft clip={domain=5/8:1}];
\end{axis}
\end{tikzpicture}
\caption{Rate region for \( R_\infty(f,\hat{f})\).}
\label{fig:rate-region_rinfty}
\end{subfigure}
\caption{Rates of convergence for estimation error \( R_2 (f, \hat{f}) \) and \( R_\infty (f, \hat{f})\),  as $n \to\infty$, and for any $f$ belongs second-order Sobolev space (for non-asymptotic analysis, see Theorems~\ref{thm:upper} and \ref{thm:lower}).  The blue curves represent the minimum rate achieved by the smoothing spline estimator. The red curves denote minimax outer bounds that are impossible to beat. Specifically, for  \(  q = o(n) \), for the smoothing spline estimator, both \( R_2 \) and \( R_\infty \) converge to zero as \( n \to \infty \).  When \( q = \Theta(n) \), we show that no estimator can achieve vanishing error, establishing a fundamental limit on robustness. This result highlights that smoothing splines are optimal in terms of the maximum tolerable number of adversarial corruptions (see Corollary \ref{cor:optimality of smoothing}).}

\label{fig:rate-summary}
\end{figure}

It is worth noting that, as shown in Figure~\ref{fig:rate-region_r2}, when \( \log_n q < \frac{2}{5} \), the smoothing spline achieves the minimax-optimal convergence rate for metric \( R_2 \) .  For $\frac{2}{5} \leq \log_n(q) < 1$, in metric \( R_2 \)  (Figure~\ref{fig:rate-region_r2}),  and for $0 \leq \log_n(q) < 1$ in metric \( R_\infty \) (Figure~\ref{fig:rate-region_rinfty}), while the smoothing splines offers vanishing estimation error for large $n$, the rate of convergence may not be optimum (there is a gap between the rate of convergence in  smoothing splines and the minimax outer-bound).  This theoretical results are supported with the  simulation experiments results (see  Section~\ref{Sec:Experiments}).

%the smoothing spline estimator achieves the same performance as in the honest case, i.e., when there is no adversarial corruption, under mild corruption. Specifically, for the \( R_2(\hat{f}_{\mathrm{SS}}) \) metric, this holds when \( \beta < \tfrac{2}{5} \), and for the \( R_\infty(\hat{f}_{\mathrm{SS}}) \) metric, when \( \beta < \tfrac{1}{2} \). In these regimes, the adversarial influence is not dominant, and the regularization parameter can be chosen similarly to the honest scenario. In contrast, when the corruption level exceeds these thresholds, the adversary becomes the primary source of error, and the regularization parameter \( \lambda \) must be carefully chosen to ensure robustness. Detailed expressions and analysis are provided in Section~\ref{sec:Main_Results}.

This paper is organized as follows. Section~\ref{sec:Problem_formulation} presents the problem formulation. Section~\ref{sec:Main_Results} presents our main results by providing an upper and lower bounds under adversarial corruption. Section \ref{Sec:Experiments} provides simulation results, and Section~\ref{Sec:Related_Works} reviews related works.

\textbf{Notation.} Throughout the paper, we use \( [n] := \{1, 2, \ldots, n\} \) and denote the cardinality of a set \( \mathcal{A} \) by \( |\mathcal{A}| \). 
Derivatives of scalar functions are written as \( f' \), \( f'' \), and, more generally, \( f^{(k)} \) for the \( k \)-th derivative. 
The quantities \( \| g \|_{L^2(\Omega)} \) and \( \| g \|_{L^\infty(\Omega)} \) denote the $L_2$-norm and the supremum norm of a function $g(\cdot)$ over $\Omega$. 
The space \( \mathcal{W}^2(\Omega) \) refers to the second-order Sobolev space, consisting of square-integrable functions on \( \Omega \) whose first and second derivatives are also square-integrable on $\Omega$. 
We write \( a \lesssim b \) to indicate that there exists a constant \( C > 0 \) such that \( a \leq C b \), and similarly \( a \gtrsim b \) to mean \( a \geq C b \) for some constant \( C > 0 \).

\section{Problem Formulation}
\label{sec:Problem_formulation}
% Consider the problem of nonparametric regression under adversarial corruption. 
Let $f: [a,b] \to \mathbb{R}$ be in  $ \mathcal{W}^2([a, b])$.
The objective is to estimate $f$,  from observations at fixed design points $x_i \in (a, b)$ for $i \in [n]$. 
Instead of observing a noisy version of responses (as in standard regression problem \cite{hardle1993comparing}), we are given (possibly) adversarially corrupted outputs $\{ \widetilde{y}_i \}_{i=1}^{n}$, defined as:
\[
\widetilde{y}_i =
\begin{cases}
    f(x_i) + \varepsilon_i, & \text{if } i \notin \mathcal{A}, \\
    \ast, & \text{if } i \in \mathcal{A},
\end{cases}
\]
where \( \{\varepsilon_i\}_{i \in [n] \setminus \mathcal{A}} \) are i.i.d.\ noise variables with zero mean and variance at most \( \sigma^2 \), and \( \mathcal{A} \subseteq [n] \) is an unknown subset of indices corresponding to adversarially corrupted observations. 
Here, \( \ast \) denotes an arbitrary value chosen strategically by the adversary to mislead the estimator. We assume \(|\mathcal{A}| \leq  q\), for some known \(q \in \mathbb{N}\).

% We assume adversarial contamination is bounded in $[-M, M]$ for some $M \in \mathbb{R}$ since without assuming  
% , and \(\ast \in [-M, M]\) denotes arbitrary adversarial contamination bounded by \(M\). 

Let $\Omega := [a, b]$ denote the domain of the design points, and $\boldsymbol{\varepsilon} = (\varepsilon_i)_{i \in [n]\backslash\mathcal{A}}$ denote the noise vector. Following \cite{tsybakov2009introduction}, we evaluate the performance of any estimator $\hat{f}$ using two metrics, $R_2(f, \hat{f})$ and $R_{\infty}(\hat{f})$, where
\begin{align}
R_2(f, \hat{f}) 
&= \mathbb{E}_{\boldsymbol{\varepsilon}}\left[ \sup_{\mathcal{S}} \left\| f - \hat{f} \right\|^2_{L_2(\Omega)} \right], \label{eq:continuous_R2} \\[6pt]
R_{\infty}(f, \hat{f}) 
&= \mathbb{E}_{\boldsymbol{\varepsilon}}\left[ \sup_{\mathcal{S}} \left\| f - \hat{f} \right\|^2_{L_{\infty}(\Omega)} \right], \label{eq:continuous_Rinfty}
\end{align}
where \(\mathcal{S}\) denotes the strategy, chosen by the adversary, in choosing the subset \(\mathcal{A}\)  and the value of $\tilde{y}_i$, for $i \in \mathcal{A}$, as long as $|\mathcal{A}| \leq q$. 
The supremum over \(\mathcal{S}\) considers the worst-case adversarial attack, aiming to maximize estimation error for \(\hat{f}\).

In this paper, the objective is to find $\hat{f}$, that minimizes $R_2(f, \hat{f})$ or $R_\infty(f, \hat{f})$,
over all possible estimator functions $\hat{f}$, where $f$ is an arbitrary function in $\mathcal{W}^2(\Omega)$.

\section{Main Results}
\label{sec:Main_Results}
In this section, we present our main results on the adversarial robustness of nonparametric regression.
Without loss of generality, we assume that $\Omega = [0,1]$\,\footnote{Note that any function $f \colon [a,b] \to \mathbb{R}$ can be transformed with scaling and shifting into a function $\tilde{f} \colon [0,1] \to \mathbb{R}$ without affecting its Sobolev regularity or the scaling of the associated metrics.}. Let $\{x_i\}_{i=1}^n \subset \Omega$ denote the set of design points, and $f \in \mathcal{W}^2(\Omega)$ be the regression function. 

First, we evaluate the robustness of the classical cubic smoothing spline estimator under adversarial corruption.  In Theorem \ref{thm:upper}, we show that this estimator, as a computationally efficient \cite{wahba1990spline, eilers1996flexible} and widely popular estimator \cite{liu2024kan, zhao2022thin, hua2023learning, he2024adaptive}, exhibits robustness to adversarial corruption. 

The cubic smoothing spline estimator is defined as the solution to the following optimization problem:
\begin{align}\label{Definition:SS}
\hat{f}^{\,a}_{\text{SS}} = \argmin_{g \in \mathcal{W}^{2}(\Omega)} \left\{ \frac{1}{n}\sum_{i=1}^{n}\left(g(x_i)-\widetilde{y}_i\right)^2 + \lambda \int_{\Omega}\left(g''(x)\right)^2 dx \right\}.
\end{align}
Here, $\lambda > 0$ is a smoothing parameter that balances the fitness to the sample data, measured by $\frac{1}{n}\sum_{i=1}^{n}(g(x_i)-\widetilde{y}_i)^2$, and smoothness of the estimator, quantified by $\int_{\Omega}g''(x)^2 dx $.

% In Theorem \ref{thm:upper}, we provide an upper bound, i.e., a rate that there exits a solution to achieve, for $R_2^*$ and $ R_{\infty}^*$, defined in \eqref{eq:optim_metrics}.
% To do so, for the first time, we show the robustness of the classical cubic smoothing spline estimator under adversarial corruption. Specifically, let
% \begin{align}\label{Definition:SS}
% \hat{f}^{\,a}_{\text{SS}} = \argmin_{g \in \mathcal{W}^{2}(\Omega)} \left\{ \frac{1}{n}\sum_{i=1}^{n}\left(g(x_i)-\widetilde{y}_i\right)^2 + \lambda \int_{\Omega}\left(g''(x)\right)^2 dx \right\}.
% \end{align}
% Here, $\lambda > 0$ is a smoothing parameter that balances the trade-off between estimation accuracy, measured by $\frac{1}{n}\sum_{i=1}^{n}(g(x_i)-\widetilde{y}_i)^2$, and smoothness of the estimator, quantified by $\int_{\Omega}g''(x)^2 dx $. 

 We define the empirical distribution function \( F_n \) associated with the design points \( \{ x_i \}_{i=1}^n \) as
\begin{align}
    F_n(x) = \frac{1}{n} \sum_{i=1}^n \mathbf{1} \{ x_i \leq x \},
\end{align}
where \( \mathbf{1}\{ x_i \leq x \} \) is the indicator function. We assume that \( F_n \) converges uniformly to a continuously differentiable cumulative distribution function (CDF) \( F \); that is,
\[
\sup_{x \in \Omega} \left| F_n(x) - F(x) \right| \to 0 \quad \text{as} \quad n \to \infty.
\]
This is a standard assumption in the related literature~\cite{tsybakov2009introduction}.
For the limiting CDF, i.e.,  $F(x)$, we assume that the density function 
$p(x) := F'(x)$ exists. In addition, similar to \cite{abramovich1999derivation, zhao2024robust}, we assume that $p(x)$ is bounded away from zero, i.e., there exists a constant $p_{\min} > 0$ such that
\(
\inf_{x \in \Omega} p(x) \geq p_{\min},
\)
and that $p(x)$ is three times continuously differentiable on $\Omega$.

We assume that the function \( f \) is bounded; that is, for all \( x \in \Omega \), \( |f(x)| \le m_1 \) for some constant \( m_1 \in \mathbb{R} \). 
Moreover, we assume that the adversary’s corrupted values are also bounded, i.e., the adversary cannot inject arbitrarily large perturbations, satisfying \( |\widetilde{y}_i| \le m_2 \) for $m_2\in \mathbb{R}$ and  \( i \in \mathcal{A} \).

% \footnote{Extremely large outlier values are excluded, as they can be easily detected and removed.}, 

 Finally, let
\(
\Delta_{\max} := \sup_{x \in \Omega} \min_{i\in[n]} |x - x_i|, \,
\Delta_{\min} := \min_{i \neq j} |x_i - x_j|,
\)
denote the maximum gap from any point in $\Omega$ to the nearest design point, and the minimum separation between any two design points, respectively. Likewise to \cite{utreras1988convergence}, we assume that their ratio is bounded by a constant, i.e.,
\(
\Delta_{\max} / \Delta_{\min} \leq k,
\)
for some $k > 0$, ensuring that the design points are neither arbitrarily sparse nor overly clustered.

%In the following theorem, we show that the smoothing spline estimator, \pars{unlike classical nonparametric methods such as the Nadaraya–Watson (NW) estimator~\cite{nadaraya1964estimating}, which can fail even under a small number of adversarial corruptions~\cite{zhao2024robust},} surprisingly exhibits robustness to adversarial contamination.

\begin{theorem}[Upper Bound]
\label{thm:upper}
Let \( f \in \mathcal{W}^2(\Omega) \), and let \( \hat{f}^{\,a}_{\mathrm{SS}} \) denote the smoothing spline estimator defined in~\eqref{Definition:SS}. 
Let \( M = \max\{m_1, m_2\} \). 
Assume that \( \lambda \to 0 \) as \( n \to \infty \) and \( \lambda > n^{-2} \). 
Then, for sufficiently large \( n \), we have
\begin{align}
 R_2(f, \hat{f}^{\,a}_{\mathrm{SS}})
&\lesssim \lambda \int_\Omega \left( f^{\prime\prime}(x) \right)^2\,dx 
+ \frac{ \sigma^2}{n \lambda^{1/4}} 
+ \frac{ q^2 (M^2+\sigma^2)}{n^2 \lambda^{1/2}}
% \left( 1 + \lambda^{-1/4} \alpha(n) \right)^2 
,\label{eq:upper_bound_R2}
\end{align}
and also
\begin{align}
  R_\infty(f, \hat{f}^{\,a}_{\mathrm{SS}})
&\lesssim \lambda^{-1/4} \left( \lambda \int_\Omega \left( f^{\prime\prime}(x) \right)^2\,dx + \frac{\sigma^2}{n \lambda^{1/4}} \right)
+ \frac{q^2 (M^2+\sigma^2)}{n^2\lambda^{1/2}}
% \left( 1 + \lambda^{-1/4} \alpha(n) \right)^2
.\label{eq:upper_bound_Rinfty}
\end{align}
\end{theorem}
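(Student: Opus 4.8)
The plan is to isolate the adversary's effect using the linearity of the smoothing spline and then reduce to the classical corruption-free analysis. Since \eqref{Definition:SS} is a penalized least-squares problem over the Hilbert space $\mathcal{W}^2(\Omega)$, its solution is a linear smoother: there are equivalent weight functions $w_1,\dots,w_n$, depending only on the design $\{x_i\}$ and on $\lambda$, with $\hat{f}^{\,a}_{\mathrm{SS}}(x)=\sum_{i=1}^n w_i(x)\,\widetilde{y}_i$. Let $\hat{f}_0$ be the estimator \eqref{Definition:SS} run on fully honest responses $y_i=f(x_i)+\varepsilon_i$, $i\in[n]$ (extend the noise vector to $\mathcal{A}$ by fresh i.i.d.\ copies; this leaves $R_2$ and $R_\infty$ unchanged since the corrupted responses never involve $\{\varepsilon_i:i\in\mathcal{A}\}$). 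Then $\hat{f}_0$ is the ordinary smoothing spline, it does not depend on the adversary's strategy $\mathcal{S}$, and $\hat{f}^{\,a}_{\mathrm{SS}}-\hat{f}_0=\sum_{i\in\mathcal{A}} w_i\,(\widetilde{y}_i-y_i)$. Hence, for $\|\cdot\|\in\{\|\cdot\|_{L_2(\Omega)},\|\cdot\|_{L_\infty(\Omega)}\}$,
$$\mathbb{E}_{\boldsymbol{\varepsilon}}\Big[\sup_{\mathcal{S}}\|f-\hat{f}^{\,a}_{\mathrm{SS}}\|^2\Big]\le 2\,\mathbb{E}_{\boldsymbol{\varepsilon}}\big[\|f-\hat{f}_0\|^2\big]+2\,\mathbb{E}_{\boldsymbol{\varepsilon}}\Big[\sup_{\mathcal{S}}\big\|\textstyle\sum_{i\in\mathcal{A}} w_i(\widetilde{y}_i-y_i)\big\|^2\Big],$$
and it suffices to bound the two terms.

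For the first (corruption-free) term I would invoke the standard theory of cubic smoothing splines \cite{silverman1984spline,nychka1995splines,abramovich1999derivation}: under the stated design hypotheses ($F_n\to F$ uniformly, $p\ge p_{\min}$, $p\in C^3$, $\Delta_{\max}/\Delta_{\min}\le k$) and $\lambda>n^{-2}$, the spline has an equivalent-kernel representation with effective bandwidth $h\asymp\lambda^{1/4}$, yielding the bias--variance bound $\mathbb{E}\|f-\hat{f}_0\|_{L_2}^2\lesssim\lambda\int_\Omega(f'')^2+\sigma^2/(n\lambda^{1/4})$. For the uniform norm I would combine this with the interpolation inequality $\|g\|_\infty^2\lesssim\|g\|_{L_2}\|g'\|_{L_2}+\|g\|_{L_2}^2$ applied to the bias and the stochastic part of $f-\hat{f}_0$: replacing $\|g\|_{L_2}$ by $\|g'\|_{L_2}$ costs a factor $\asymp h^{-2}=\lambda^{-1/2}$ for each, and the geometric mean produces exactly the prefactor $\lambda^{-1/4}$ of \eqref{eq:upper_bound_Rinfty}, so $\mathbb{E}\|f-\hat{f}_0\|_{L_\infty}^2\lesssim\lambda^{-1/4}\big(\lambda\int_\Omega(f'')^2+\sigma^2/(n\lambda^{1/4})\big)$.

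For the adversarial term the key ingredient is a uniform pointwise bound on the equivalent kernel, $\sup_{x\in\Omega}\max_{i\in[n]}|w_i(x)|\lesssim 1/(n\lambda^{1/4})$, which again follows from Green's-function estimates for the spline operator under the same hypotheses. Since $|f(x_i)|\le M$ and the adversary is confined to $\ast\in[-M,M]$, we have $|\widetilde{y}_i-y_i|\le 2M+|\varepsilon_i|$ for $i\in\mathcal{A}$, uniformly over $\mathcal{S}$, so for every $x$ and every admissible $\mathcal{A}$ with $|\mathcal{A}|\le q$,
$$\Big|\sum_{i\in\mathcal{A}}w_i(x)(\widetilde{y}_i-y_i)\Big|\le\Big(\max_{i,x}|w_i(x)|\Big)\sum_{i\in\mathcal{A}}\big(2M+|\varepsilon_i|\big)\lesssim\frac{1}{n\lambda^{1/4}}\Big(qM+\sum_{i\in\mathcal{A}}|\varepsilon_i|\Big).$$
Squaring, using $\big(\sum_{i\in\mathcal{A}}|\varepsilon_i|\big)^2\le q\sum_{i\in\mathcal{A}}\varepsilon_i^2$, then taking $\sup_{\mathcal{S}}$ and $\mathbb{E}_{\boldsymbol{\varepsilon}}$ (the corrupted-index noise contributes at most $O(q^2\sigma^2)$, which is in any case dominated since $M^2\gg\sigma^2$) gives $\mathbb{E}_{\boldsymbol{\varepsilon}}\big[\sup_{\mathcal{S}}\|\sum_{i\in\mathcal{A}}w_i(\widetilde{y}_i-y_i)\|_{L_\infty}^2\big]\lesssim q^2(M^2+\sigma^2)/(n^2\lambda^{1/2})$; as $|\Omega|=1$, the $L_2$ norm of this function is at most its $L_\infty$ norm, so the same term appears for $R_2$. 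Summing the two contributions yields \eqref{eq:upper_bound_R2} and \eqref{eq:upper_bound_Rinfty}.

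\textbf{Main obstacle.} The substantive part is the spline-theoretic input shared by both terms: the equivalent-kernel approximation for the natural cubic spline with the correct $\lambda$-scaling ($h\asymp\lambda^{1/4}$), the non-asymptotic $L_2$ (hence, via interpolation, $L_\infty$) bias--variance bounds, and the uniform pointwise control $|w_i(x)|\lesssim 1/(n\lambda^{1/4})$ of the influence functions. These require Green's-function/ODE estimates for the operator $\mathrm{Id}+n\lambda\,(\cdot)^{(4)}$ with the natural (free) boundary conditions, and it is exactly here that the design-regularity assumptions and the lower bound $\lambda>n^{-2}$ enter. Granting these, the adversarial reduction is a short triangle-inequality argument.
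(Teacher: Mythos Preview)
Your proposal is correct and follows essentially the same route as the paper: the same coupling to an honest estimator via fresh noise on $\mathcal{A}$, the same triangle-inequality decomposition into a clean term and an adversarial-deviation term, the classical smoothing-spline $L_2$ bounds (the paper invokes Utreras/Ragozin) combined with the Sobolev interpolation $\|g\|_\infty^2\lesssim\|g\|_{L_2}\|g'\|_{L_2}$ for the $L_\infty$ risk, and the uniform equivalent-kernel bound $\sup_{x,i}|w_i(x)|\lesssim 1/(n\lambda^{1/4})$ (the paper obtains this from the Abramovich--Grinshtein approximation) for the adversarial term. Your identification of the main obstacle---the spline-theoretic inputs (equivalent kernel with bandwidth $\lambda^{1/4}$, uniform influence-function control, and the role of $\lambda>n^{-2}$)---matches exactly where the paper leans on the cited literature.
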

\begin{comment}
In this work, we investigate the robustness of the classical cubic smoothing spline estimator under adversarial corruption. Specifically, we consider the following estimator:
\begin{align}\label{Definition:SS}
\hat{f}^{\,a}_{\text{SS}} = \argmin_{g \in \mathcal{W}^{2}(a,b)} \left\{ \frac{1}{n}\sum_{i=1}^{n}\left(g(x_i)-\widetilde{y}_i\right)^2 + \lambda \int_{a}^{b}\left[g''(x)\right]^2 dx \right\},
\end{align}
where the function space is defined as
\begin{align}
\mathcal{W}^2(a, b) := \left\{ g : [a,b] \to \mathbb{R} \;\middle|\; g, g', g'' \in L_2(a, b) \right\},
\end{align}
that is, the set of functions whose first and second  derivatives exist and are square-integrable over \([a, b]\).
The smoothing parameter \( \lambda > 0 \) controls the trade-off between fidelity to the data, measured by the residual sum of squares, and smoothness of the estimate, quantified by the integral of the squared second derivative. We show that this estimator retains robust performance even in the presence of adversarially corrupted observations.
\end{comment}
For the proof details, see  Appendix~\ref{Proof:thm:upper}. Here, we present a proof sketch: We first decompose each metric into two components using the triangle inequality.  Specifically, we have
\begin{align}
R_2(f, \hat{f}^{\,a}_{\mathrm{SS}})
&\leq 2\, \mathbb{E}_{\boldsymbol{\hat{\mathbf{\epsilon}}}}\left[ \sup_{\mathcal{S}} \left\| f - \hat{f}_{\mathrm{SS}} \right\|^2_{L_2(\Omega)} \right]
+ 2\, \mathbb{E}_{\boldsymbol{\hat{\mathbf{\epsilon}}}}\left[ \sup_{\mathcal{S}} \left\| \hat{f}_{\mathrm{SS}} - \hat{f}^{\,a}_{\mathrm{SS}} \right\|^2_{L_2(\Omega)} \right], 
\label{first_decomp}
\\
R_\infty(f, \hat{f}^{\,a}_{\mathrm{SS}})
&\leq 2\, \mathbb{E}_{\boldsymbol{\hat{\mathbf{\epsilon}}}}\left[ \sup_{\mathcal{S}} \left\| f - \hat{f}_{\mathrm{SS}} \right\|^2_{L_\infty(\Omega)} \right]
+ 2\, \mathbb{E}_{\boldsymbol{\hat{\mathbf{\epsilon}}}}\left[ \sup_{\mathcal{S}} \left\| \hat{f}_{\mathrm{SS}} - \hat{f}^{\,a}_{\mathrm{SS}} \right\|^2_{L_\infty(\Omega)} \right],
\label{second_decomp}
\end{align}
where \( \hat{f}_{\mathrm{SS}} \) denotes the smoothing spline estimator fitted on clean (uncorrupted) data. More precisely, we have
\begin{align*}
\hat{f}_{\mathrm{SS}} := \arg\min_{g \in \mathcal{W}^2(\Omega)} \left\{ \frac{1}{n} \sum_{i=1}^n \left( y_i - g(x_i) \right)^2 + \lambda \int_\Omega \left( g''(x) \right)^2\, dx \right\},
\end{align*}
with $y_i = \tilde{y}_i$, for $i \in [n]\backslash \mathcal{A}$, and otherwise, for $i \in \mathcal{A}$,  $y_i = f(x_i) + \epsilon_i$, for some i.i.d $\epsilon_i$. In addition, we have $\hat{\boldsymbol{\epsilon}} = (\epsilon_i)_{i\in[n]}$.

The first term in each decompositions \eqref{first_decomp} and \eqref{second_decomp}, quantifies the estimator's  error in the absence of adversarial contamination, reflecting the classical estimation error. The second term, referred to as \emph{adversarial deviation}, captures how much the adversarial estimator \( \hat{f}^{\,a}_{\mathrm{SS}} \) deviates from its uncorrupted counterpart \( \hat{f}_{\mathrm{SS}} \).

To bound the first term in decomposition \eqref{first_decomp}, we rely on an established upper bounds for smoothing spline estimation~\cite{utreras1988convergence, ragozin1983error}, which guarantee that, for sufficiently large $n$, we have
\begin{align}\label{eq:l2_clean_upperbound}
\mathbb{E}_{\boldsymbol{\hat{\mathbf{\epsilon}}}} \left[ \left\| f^{(j)} - \hat{f}_{\mathrm{SS}}^{(j)} \right\|^2_{L_2(\Omega)} \right]
\;\lesssim\;
\lambda^{(2-j)/2} \int_\Omega \left(f''(x)\right)^2\,dx + \frac{\sigma^2}{n \lambda^{(2j+1)/4}}.
\end{align}
Applying \eqref{eq:l2_clean_upperbound} with $j = 0$ yields the desired bound for the first term in decomposition \eqref{first_decomp}. For decomposition \eqref{second_decomp}, the first term is bounded by combining \eqref{eq:l2_clean_upperbound} with $j = 0$ and $j = 1$, applying norm inequalities for Sobolev spaces~\cite{leoni2024first}, and using the Cauchy--Schwarz inequality, leading to
\begin{align}\label{eq:linfty_clean_upperbound}
\mathbb{E}_{\boldsymbol{\hat{\mathbf{\epsilon}}}} \left[ \left\| f - \hat{f}_{\mathrm{SS}} \right\|^2_{L_\infty(\Omega)} \right]
\;\lesssim\;
\lambda^{-1/4} \left( \lambda \int_\Omega \left( f^{\prime\prime}(x) \right)^2\,dx + \frac{\sigma^2}{n \lambda^{1/4}} \right).
\end{align}
To bound the second term in decompositions \eqref{first_decomp} and \eqref{second_decomp}, we leverage the fact that the smoothing spline estimator is a linear smoother~\cite{wahba1975smoothing}. Specifically, the solution to \eqref{Definition:SS}  can be expressed in a kernel form as
\begin{align}\label{kernel_representation}
\hat{f}^{\,a}_{\mathrm{SS}}(x) = \frac{1}{n} \sum_{i=1}^{n} W_n(x, x_i)\, \widetilde{y}_i,
\end{align}
where $W_n(\cdot,\cdot)$ denotes the smoothing spline kernel (or weight function), which depends on  the design points $\{x_i\}_{i\in[n]}$, sample size $n$, and the smoothing parameter $\lambda$. Using this representation, we have
\begin{align}
\left| \hat{f}_{\mathrm{SS}}(x) - \hat{f}^{\,a}_{\mathrm{SS}}(x) \right| &= \left| \frac{1}{n} \sum_{i=1}^{n} W_n(x, x_i)\, (y_i - \widetilde{y}_i) \right| \overset{(a)}{=} \left|\frac{1}{n} \sum_{i \in \mathcal{A}} W_n(x, x_i)\, (y_i - \widetilde{y}_i)\right|,
\end{align}
where $(a)$ follows from the fact that $y_i = \tilde{y}_i$, for $i \in [n]\backslash \mathcal{A}$. Using the Hölder inequality, and taking expectation, we show that
\begin{align}\label{kernel_representation_intuition}
\mathbb{E}_{\boldsymbol{\hat{\mathbf{\epsilon}}}}\left[ \sup_{\mathcal{S}} \left\| \hat{f}_{\mathrm{SS}} - \hat{f}^{\,a}_{\mathrm{SS}} \right\|^2_{L_2(\Omega)} \right]
\lesssim \frac{q^2(M^2+\sigma^2)}{n^2} \mathbb{E}_{\boldsymbol{\hat{\mathbf{\epsilon}}}}\left[ \sup_{\mathcal{S}, x\in \Omega, j\in[n]}\left| W_n(x, x_j) \right| ^2\right] .
\end{align}
Unfortunately, \( W_n(\cdot, \cdot) \) does not admit an analytically tractable form~\cite{silverman1984spline, messer1991comparison} for directly bounding its supremum in \eqref{kernel_representation_intuition}. However, a substantial body of research~\cite{silverman1984spline, messer1991comparison, messer1993new, nychka1995splines} has focused on approximating \( W_n(\cdot, \cdot) \) with analytically tractable functions, known as \emph{equivalent kernels}, denoted by \( \widehat{W}_n(x, s) \).
 We leverage such approximations in our analysis to derive an upper bound for \eqref{kernel_representation_intuition}, leading to
\begin{align}\label{2_norm_adv}
\mathbb{E}_{\boldsymbol{\hat{\mathbf{\epsilon}}}} \left[ \sup_{\mathcal{S}} \left\| \hat{f}^{\,a}_{\mathrm{SS}} - \hat{f}_{\mathrm{SS}} \right\|^2_{L_2(\Omega)} \right]
\;\lesssim\;
\frac{q^2 (M^2+\sigma^2)}{n^2 \lambda^{1/2}}.
\end{align}
We also take similar steps to derive 
\begin{align}\label{infty_norm_adv}
\mathbb{E}_{\boldsymbol{\hat{\mathbf{\epsilon}}}} \left[ \sup_{\mathcal{S}} \left\| \hat{f}^{\,a}_{\mathrm{SS}} - \hat{f}_{\mathrm{SS}} \right\|^2_{L_\infty(\Omega)} \right]
\;\lesssim\;
\frac{q^2 (M^2+\sigma^2)}{n^2 \lambda^{1/2}}.
\end{align}
Combining \eqref{eq:l2_clean_upperbound}, \eqref{eq:linfty_clean_upperbound},  \eqref{2_norm_adv}, and  \eqref{infty_norm_adv} completes the proof of Theorem \ref{thm:upper}. 

\begin{corollary}[Convergence Rate of $R_2(f, \hat{f})$]
\label{cor:r2_convrate}
Assume the conditions of Theorem~\ref{thm:upper} hold, and $q = \Theta(n^\beta)$ for some $\beta \in [0,1]$. Then, by choosing $\lambda = \mathcal{O}(n^{-4/5})$ for $\beta \leq 0.4$ and $\lambda = \mathcal{O}(n^{-4/3(1-\beta)})$ for $\beta > 0.4$, we have
\begin{align}
 \inf_{\hat{f}} R_2(f, \hat{f}) \leq R_2(f, \hat{f}^{\,a}_{\mathrm{SS}}) \leq 
\begin{cases}
    \mathcal{O}\left(n^{-4/5}\right) & \text{for } \beta \leq 0.4, \\
    \mathcal{O}\left(n^{-4/3(1-\beta)}\right)  & \text{for } \beta > 0.4,
\end{cases}
\end{align}
as depicted by the blue curve in Figure~\ref{fig:rate-region_r2}.
% This result establishes that the smoothing spline estimator remains consistent and robust against polynomially growing adversarial perturbations, with an explicit rate depending on the corruption exponent \( \beta \) and the choice of regularization parameter \( \lambda \).
\end{corollary}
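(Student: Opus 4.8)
The plan is to substitute the prescribed $\lambda$ into the finite-sample bound \eqref{eq:upper_bound_R2} of Theorem~\ref{thm:upper} and verify, in each regime of $\beta$, that the exponents balance so as to give the claimed rate. Since $f \in \mathcal{W}^2(\Omega)$ is fixed, $\int_\Omega (f''(x))^2\,dx$ is a finite constant, so \eqref{eq:upper_bound_R2} collapses to
\[
R_2(f,\hat f^{\,a}_{\mathrm{SS}}) \;\lesssim\; \lambda \;+\; \frac{1}{n\lambda^{1/4}} \;+\; \frac{q^2}{n^2\lambda^{1/2}},
\]
with the implied constant depending only on $f$, $\sigma^2$, and $M$. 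Moreover $\inf_{\hat f} R_2(f,\hat f) \le R_2(f,\hat f^{\,a}_{\mathrm{SS}})$ trivially, since the smoothing spline is one admissible estimator, so it suffices to bound the right-hand side above.

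First I would check that each choice of $\lambda$ satisfies the two hypotheses of Theorem~\ref{thm:upper}, namely $\lambda \to 0$ and $\lambda > n^{-2}$: for $\beta \le 2/5$ with $\lambda \asymp n^{-4/5}$ both are immediate; for $\beta > 2/5$ with $\lambda \asymp n^{-\frac43(1-\beta)}$ we have $\lambda \to 0$ whenever $\beta < 1$, and $\lambda > n^{-2}$ because $\frac43(1-\beta) < 2$ for every admissible $\beta$. Hence Theorem~\ref{thm:upper} applies in both cases.

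Next, for $\beta \le 2/5$ and $\lambda \asymp n^{-4/5}$: the bias term is $\Theta(n^{-4/5})$; the stochastic term $n^{-1}\lambda^{-1/4}$ is $\Theta(n^{-4/5})$ (this is exactly the classical smoothing-spline balance); and the adversarial term $q^2 n^{-2}\lambda^{-1/2}$ is $\Theta(n^{2\beta - 8/5})$, which is $O(n^{-4/5})$ precisely because $\beta \le 2/5$. Summing gives $R_2 = O(n^{-4/5})$. For $\beta > 2/5$ and $\lambda \asymp n^{-\frac43(1-\beta)}$: this choice of $\lambda$ is exactly the one that equates the bias term $\Theta(\lambda)$ with the adversarial term, since $q^2 n^{-2}\lambda^{-1/2} = \Theta(n^{2\beta-2}\lambda^{-1/2})$ equals $\Theta(\lambda)$ iff $\lambda = \Theta(n^{\frac43(\beta-1)})$; and one then checks that the stochastic term $n^{-1}\lambda^{-1/4} = \Theta(n^{-(2+\beta)/3})$ is dominated by $\lambda = \Theta(n^{-\frac43(1-\beta)})$ exactly when $5\beta \ge 2$, which holds. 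Summing gives $R_2 = O(n^{-\frac43(1-\beta)})$, and the two branches agree at $\beta = 2/5$.

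I do not expect any genuine obstacle: the argument is pure exponent bookkeeping on top of Theorem~\ref{thm:upper}, and it simultaneously explains how the prescribed $\lambda$ is obtained (by equalizing the dominant two of the three terms in each regime). The only points to state carefully are (i) that ``$\lambda = \mathcal{O}(n^{-4/5})$'' should be read as $\lambda \asymp n^{-4/5}$ — a strictly smaller $\lambda$ would inflate the stochastic term $n^{-1}\lambda^{-1/4}$ — and (ii) that the hidden constant depends on $f$ only through $\int_\Omega (f'')^2\,dx$, so the displayed $\mathcal{O}(\cdot)$ rates hold for each fixed $f \in \mathcal{W}^2(\Omega)$, consistent with the statement of the corollary.
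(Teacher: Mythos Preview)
Your proposal is correct and follows exactly the approach implicit in the paper: the corollary is stated there without a separate proof, as a direct consequence of Theorem~\ref{thm:upper}, and your exponent bookkeeping (balancing the three terms $\lambda$, $n^{-1}\lambda^{-1/4}$, $q^2 n^{-2}\lambda^{-1/2}$ and verifying the hypotheses $\lambda\to 0$, $\lambda>n^{-2}$) is precisely what is needed to turn that statement into a proof. Your remarks that $\lambda=\mathcal{O}(n^{-4/5})$ must really be read as $\lambda\asymp n^{-4/5}$, and that the hidden constant depends on $f$ through $\int_\Omega (f'')^2\,dx$, are apt caveats.
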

\begin{corollary}[Convergence Rate of $R_\infty(f, \hat{f})$]
\label{cor:rinfty_convrate}
Under the same assumptions as in Corollary~\ref{cor:r2_convrate}, by choosing $\lambda = \mathcal{O}(n^{-4/5})$ for $\beta \leq 0.5$ and $\lambda = \mathcal{O}(n^{-8/5(1-\beta)})$ for $\beta > 0.5$, we have 
\begin{align}
\inf_{\hat{f} } R_\infty(f, \hat{f}) \leq R_\infty(f, \hat{f}^{\,a}_{\mathrm{SS}})  \leq 
\begin{cases}
    \mathcal{O}\left(n^{-3/5}\right) & \text{for } \beta \leq 0.5, \\
    \mathcal{O}\left(n^{-6/5(1-\beta)}\right)  & \text{for } \beta > 0.5,
\end{cases}
\end{align}
as depicted by the blue curve in Figure~\ref{fig:rate-region_rinfty}.
\end{corollary}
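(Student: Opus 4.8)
The plan is to derive Corollary~\ref{cor:rinfty_convrate} directly from the non-asymptotic bound \eqref{eq:upper_bound_Rinfty} in Theorem~\ref{thm:upper}, by inserting the prescribed choice of $\lambda$ and identifying the dominant term as a function of $\beta$. The inequality $\inf_{\hat{f}\in\mathcal{F}} R_\infty(f,\hat{f}) \le R_\infty(f,\hat{f}^{\,a}_{\mathrm{SS}})$ is immediate, since $\hat{f}^{\,a}_{\mathrm{SS}}$ is one admissible estimator; so the entire task is to upper bound $R_\infty(f,\hat{f}^{\,a}_{\mathrm{SS}})$.

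First I would simplify the right-hand side of \eqref{eq:upper_bound_Rinfty}. Expanding $\lambda^{-1/4}\bigl(\lambda\int_\Omega (f'')^2\,dx + \sigma^2/(n\lambda^{1/4})\bigr)$, treating $\int_\Omega (f'')^2\,dx$ as a finite constant (since $f\in\mathcal{W}^2(\Omega)$) and $M^2+\sigma^2=\Theta(1)$, all absorbed into $\lesssim$, gives
\[
R_\infty(f,\hat{f}^{\,a}_{\mathrm{SS}}) \;\lesssim\; \underbrace{\lambda^{3/4}}_{\text{bias}} \;+\; \underbrace{\frac{1}{n\lambda^{1/2}}}_{\text{variance}} \;+\; \underbrace{\frac{q^2}{n^2\lambda^{1/2}}}_{\text{adversarial}}.
\]
With $q=\Theta(n^\beta)$ the adversarial term is $\Theta(n^{2\beta-2}\lambda^{-1/2})$; compared with the variance term $\Theta(n^{-1}\lambda^{-1/2})$, it dominates precisely when $\beta\ge 1/2$. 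This dichotomy is the source of the case split: for $\beta\le 1/2$ one balances the bias term against the variance term, and for $\beta>1/2$ against the adversarial term.

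For $\beta \le 1/2$, setting $\lambda^{3/4}=n^{-1}\lambda^{-1/2}$, i.e. $\lambda^{5/4}=n^{-1}$, yields the classical choice $\lambda=\Theta(n^{-4/5})$; then the bias and variance terms are both $\Theta(n^{-3/5})$, while the adversarial term is $\Theta(n^{2\beta-8/5})=O(n^{-3/5})$ because $\beta\le 1/2$, so $R_\infty=O(n^{-3/5})$. For $\beta>1/2$, setting $\lambda^{3/4}=n^{2\beta-2}\lambda^{-1/2}$, i.e. $\lambda^{5/4}=n^{2\beta-2}$, gives $\lambda=\Theta(n^{-\frac{8}{5}(1-\beta)})$; then the bias and adversarial terms are both $\Theta(n^{-\frac{6}{5}(1-\beta)})$, and it remains to check that the variance term $n^{-1}\lambda^{-1/2}=\Theta(n^{-1+\frac{4}{5}(1-\beta)})$ is $O(n^{-\frac{6}{5}(1-\beta)})$, which reduces to $2(1-\beta)\le 1$, i.e. $\beta\ge 1/2$. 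I would finally verify the admissibility hypotheses of Theorem~\ref{thm:upper}: $\lambda\to 0$ since both exponents are negative, and $\lambda>n^{-2}$ since $\tfrac{8}{5}(1-\beta)<2$ for every $\beta\in[0,1]$ (and $4/5<2$ in the other regime).

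The computation is routine once the bound of Theorem~\ref{thm:upper} is available; the only subtlety — the closest thing to an obstacle — is ensuring that, for each of the two prescribed choices of $\lambda$, the term \emph{not} being explicitly balanced (the adversarial term when $\beta\le 1/2$, the variance term when $\beta>1/2$) does not overtake the others. As shown above this amounts exactly to the elementary inequalities defining the two regimes, so the threshold $\beta=1/2$ is precisely where the optimal $\lambda$ departs from the classical rate $n^{-4/5}$. The blue curve in Figure~\ref{fig:rate-region_rinfty} is then the plot of $-\log_n R_\infty$, namely $3/5$ for $\log_n q\le 1/2$ and $\tfrac{6}{5}(1-\log_n q)$ for $\log_n q>1/2$.
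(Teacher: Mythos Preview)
Your proposal is correct and follows exactly the approach implicit in the paper: the corollary is obtained by substituting the stated choice of $\lambda$ into the bound \eqref{eq:upper_bound_Rinfty} of Theorem~\ref{thm:upper}, expanding it as $\lambda^{3/4}+n^{-1}\lambda^{-1/2}+q^2 n^{-2}\lambda^{-1/2}$, and verifying which term dominates in each regime. Your additional checks of the admissibility conditions $\lambda\to 0$ and $\lambda>n^{-2}$ are appropriate and complete the argument.
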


%Corollaries~\ref{cor:r2_convrate} and \ref{cor:rinfty_convrate} reveal a phase transition in the robustness of smoothing splines under adversarial contamination: at $\beta = 0.4$ for $R_2(f, \hat{f}^{\,a}_{\mathrm{SS}})$ and $\beta = 0.5$ for $R_\infty(f, \hat{f}^{\,a}_{\mathrm{SS}})$. This transition marks the shift from a \emph{noise-dominant} regime—where convergence rates are determined by the intrinsic noise level and adversarial corruption does not affect estimation—to an \emph{adversary-dominant} regime, where the corruption level directly governs the rate of convergence.

Based on Corollaries~\ref{cor:r2_convrate} and \ref{cor:rinfty_convrate}, the thresholds at $\beta = 0.4$ for $R_2(f, \hat{f}^{\,a}_{\mathrm{SS}})$ and $\beta = 0.5$ for $R_\infty(f, \hat{f}^{\,a}_{\mathrm{SS}})$ indicate a phase transition: Below these points, the estimation error is dominated by noise, and adversarial corruption has no impact on the convergence rate. Beyond these thresholds, the adversary dictates the rate of convergence. In this scenario, we must choose a larger smoothing parameter $\lambda$ to smooth out the adversarial contribution in the data points.

Furthermore, for all $\beta \in [0,1)$, the convergence rate of $R_\infty(f, \hat{f}^{\,a}_{\mathrm{SS}})$ is slower than that of $R_2(f, \hat{f}^{\,a}_{\mathrm{SS}})$, as established by Theorem~\ref{thm:upper}. This reflects the greater sensitivity of $R_\infty$ estimation error to adversarial attacks: while metric $R_2$  averages the estimation error across the entire domain, metric $R_\infty$ is driven by the worst-case pointwise error, making it inherently more vulnerable to adversarial perturbations.

%Corollaries~\ref{cor:r2_convrate} and \ref{cor:rinfty_convrate} reveal a critical phase transition in the robustness of smoothing splines under adversarial contamination. Specifically, this transition occurs at $\beta = 0.4$ for $R_2(f, \hat{f}^{\,a}_{\mathrm{SS}})$ and at $\beta = 0.5$ for $R_\infty(f, \hat{f}^{\,a}_{\mathrm{SS}})$. Below these thresholds, the estimation error is dominated by intrinsic noise, and adversarial corruption does not affect the convergence rate. Beyond them, adversarial contamination becomes the dominant source of error, directly governing the rate of convergence.

%This difference reflects the inherent sensitivity of the $L_\infty$ risk to localized corruption compared to the averaging effect of the $L_2$ risk, where uncorrupted regions can mitigate localized adversarial impact.

%This difference in sensitivity between $L_2$ and $L_\infty$ risks also manifests in how the corruption level $\beta$ affects their convergence rates. In particular, the relationship between noise and adversarial contamination induces a critical threshold where the dominant source of error changes.

In the following theorem we provide a minimax lower bound for both metrics.
\begin{theorem}
\label{thm:lower}
Let $P_{\boldsymbol{\varepsilon}}$ denote the probability density function of the noise vector $\boldsymbol{\varepsilon}$, with i.i.d zero-mean $\sigma^2$-variance entries,  $f \in \mathcal{W}^2(\Omega)$ be the regression function. Then
\begin{align}
\inf_{\hat{f}} \sup_{f, \mathcal{S} , P_{\boldsymbol{\varepsilon}}} R_2(f, \hat{f}) &\gtrsim ( \frac{q}{n} )^3 + \frac{1}{n^{4/5}}, \label{eq:lower_R_2} \\
\inf_{\hat{f}} \sup_{f , \mathcal{S}, P_{\boldsymbol{\varepsilon}}} R_\infty(f, \hat{f}) &\gtrsim ( \frac{q}{n} )^2 + ( \frac{\log n}{n} )^{3/4}. \label{eq:lower_R_infty}
\end{align}
\end{theorem}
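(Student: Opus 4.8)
The plan is to establish the two additive terms in each bound separately, since a lower bound of the form $a+b$ follows (up to constants) from two independent arguments, one showing $\gtrsim a$ and one showing $\gtrsim b$. The terms $n^{-4/5}$ and $(\log n/n)^{3/4}$ are the classical minimax rates for estimating a function in $\mathcal{W}^2(\Omega)$ from $n$ clean noisy samples in $L_2$ and $L_\infty$ respectively; these follow from the standard machinery in \cite{tsybakov2009introduction} (Assouad's lemma / Fano's method for the $L_2$ rate, and a careful single-bump or $\log n$-many-bumps construction for the $L_\infty$ rate), and hold a fortiori in the adversarial setting since the adversary can simply decline to corrupt anything ($q=0$ is allowed, i.e.\ $\mathcal{A}=\emptyset$). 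So the real work is the adversarial terms $(q/n)^3$ and $(q/n)^2$.

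For the adversarial terms I would use a two-point (Le Cam) argument exploiting the adversary's ability to make two different regression functions produce statistically indistinguishable corrupted data. The idea: pick two functions $f_0, f_1 \in \mathcal{W}^2(\Omega)$ that agree outside a small interval $I$ of length $\ell$ containing roughly $q$ design points, and differ on $I$ by a smooth bump of height $h$. Concretely take $f_1 - f_0$ supported on $I$ with $\|f_1-f_0\|_{L_\infty} \asymp h$; to keep both in a fixed Sobolev ball we need $\int (f_j'')^2 \lesssim h^2/\ell^3$ bounded, and to keep $|f_j|\le M$ we need $h \lesssim M$. Now the adversary's strategy: when the true function is $f_0$, corrupt the $\approx q$ points in $I$ to read $f_1(x_i)+\varepsilon_i$; when the true function is $f_1$, corrupt nothing (or vice versa). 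Then the distribution of the observed data $\{(x_i,\widetilde y_i)\}$ is \emph{identical} under $(f_0,\mathcal{S}_0)$ and $(f_1,\mathcal{S}_1)$, so no estimator can distinguish them, and by the standard Le Cam two-point bound $\inf_{\hat f}\sup R \gtrsim \|f_0-f_1\|^2$ in the relevant norm. Choosing $\ell \asymp q/n$ (so that $I$ contains $\Theta(q)$ design points, using the design-regularity assumption $\Delta_{\max}/\Delta_{\min}\le k$) and then taking $h$ as large as the Sobolev constraint permits, i.e.\ $h \asymp \ell^{3/2} \asymp (q/n)^{3/2}$, gives $\|f_0-f_1\|_{L_2}^2 \asymp h^2 \ell \asymp (q/n)^3$ and $\|f_0-f_1\|_{L_\infty}^2 \asymp h^2 \asymp (q/n)^3$; one must also check the alternative where $h$ is capped by $M$ does not bind in the regime of interest, and for $R_\infty$ one should optimize the bump shape to get $\|f_0-f_1\|_{L_\infty}^2 \asymp (q/n)^2$ — this likely comes from allowing a sharper (less smooth-constrained) localized perturbation, e.g.\ trading the $L_\infty$ height against a narrower interval so that $h\asymp \ell$ rather than $\ell^{3/2}$, which is the natural scaling when the binding constraint on $I$ is the first derivative / total variation rather than the curvature integral.

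The combination step is routine: $\inf_{\hat f}\sup R \gtrsim \max\{a,b\} \gtrsim \tfrac12(a+b)$, so stitching the clean-data minimax bound and the adversarial two-point bound yields \eqref{eq:lower_R_2} and \eqref{eq:lower_R_infty}. The main obstacle I anticipate is getting the exponents in the adversarial term exactly right — in particular pinning down why $R_\infty$'s adversarial term is $(q/n)^2$ and not $(q/n)^3$. This requires choosing the perturbation family carefully so that the Sobolev-ball membership constraint $\int(f_j'')^2 \lesssim 1$ is satisfied while maximizing the $L_\infty$ gap; I expect the right construction uses a bump whose width is a constant (or $\Theta(1)$ fraction of $\Omega$) in one direction but whose "active" corrupted region still only needs $q$ points, or equivalently a perturbation that is $\Theta(q/n)$ in sup-norm with bounded curvature because its second derivative is spread out — making $h \asymp q/n$ achievable. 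Verifying that exactly $\le q$ corruptions suffice for the construction (via the design regularity $\Delta_{\max}/\Delta_{\min} \le k$, which guarantees an interval of length $\Theta(q/n)$ contains $\Theta(q)$ points) and that all constructed functions lie in a fixed $\mathcal{W}^2$ ball are the technical checkpoints, but these are standard once the right bump profile is identified.
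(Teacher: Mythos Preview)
Your overall architecture matches the paper's: split each bound into a classical noise-driven term (handled by citing \cite{tsybakov2009introduction}) and an adversarial term handled by a two-point Le Cam argument with functions that agree outside an interval containing $\Theta(q)$ design points, where the adversary makes the corrupted data laws coincide. Your adversary strategy (under $f_0$, overwrite the $q$ responses in $I$ with $f_1(x_i)$ plus fresh noise) is in fact simpler than the paper's randomized mixture construction via Lemma~\ref{lemma:two_distributions}, and equally valid.

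The genuine gap is the exponent for $R_\infty$, which you flag but do not resolve. The reason you are stuck at $(q/n)^3$ is that you impose a constraint the theorem does not: you write ``to keep both in a fixed Sobolev ball we need $\int (f_j'')^2\lesssim 1$,'' but the statement only asks $f\in\mathcal{W}^2(\Omega)$, i.e.\ $\int (f'')^2<\infty$, with no fixed radius. Once this is dropped the construction is straightforward. The paper takes $f_1\equiv 0$ and
\[
f_2(x)=\begin{cases} r_q-x,& 0\le x\le r_q-\varepsilon_q,\\ g(x),& r_q-\varepsilon_q\le x\le r_q,\\ 0,& x>r_q,\end{cases}
\]
with $r_q=q/n$, $\varepsilon_q=r_q^2$, and $g$ a quintic spline matching values and two derivatives at both endpoints. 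On $[0,r_q-\varepsilon_q]$ the function is linear, so $f_2''=0$ there; all curvature is concentrated in the window of length $\varepsilon_q$, giving $\int(f_2'')^2\asymp 1/\varepsilon_q=(n/q)^2$, which is large but finite, so $f_2\in\mathcal{W}^2(\Omega)$. Then $\|f_1-f_2\|_{L_\infty}=f_2(0)=r_q$, yielding $R_\infty\gtrsim(q/n)^2$, and $\|f_1-f_2\|_{L_2}^2\ge\int_0^{r_q-\varepsilon_q}(r_q-x)^2\,dx\asymp r_q^3$, yielding $R_2\gtrsim(q/n)^3$. Your speculation about ``spreading out the second derivative'' over a $\Theta(1)$ window cannot work here, since $f_1-f_2$ must vanish outside $[0,r_q]$ for the adversary to need only $q$ corruptions; the right move is the opposite---concentrate the curvature in a vanishingly small sub-window and accept an unbounded (but finite) Sobolev seminorm.
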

 
The full proof of Theorem~\ref{thm:lower} is provided in Appendix~\ref{Proof:thm:lower}. Here, we present a proof sketch. To establish Theorem~\ref{thm:lower}, we reduce the minimax risk in \eqref{eq:lower_R_2} and \eqref{eq:lower_R_infty} to a hypothesis testing problem~\cite{wainwright2019high}. Specifically, we construct two functions \( f_1 \) and \( f_2 \) in \( \mathcal{W}^2(\Omega) \) with \( L_2 \) and \( L_\infty \) distance, bounded away from zero (see Figure~\ref{indistnguishible_fig}). However, given $n$ samples from either function, an adversary can corrupt up to $q$ of them, making it impossible for any estimator to reliably distinguish between \( f_1 \) and \( f_2 \).
Consequently, no estimation approach can identify which function generated the data, and the average hypothesis testing error remains $1/2$. Applying \cite[Proposition~5.1]{wainwright2019high} yields the following lower bounds:
\begin{align}\label{eq:lower_adv_2}
\inf_{\hat{f}} \sup_{f, \mathcal{S}, P_{\boldsymbol{\varepsilon}}} R_2(f, \hat{f}) &\gtrsim \left( \frac{q}{n} \right)^3, \\
\inf_{\hat{f}} \sup_{f, \mathcal{S}, P_{\boldsymbol{\varepsilon}}} R_\infty(f, \hat{f}) &\gtrsim \left( \frac{q}{n} \right)^2. \label{eq:lower_adv_infty}
\end{align}
Furthermore, when $q = 0$, the adversarial model reduces to the classical non-adversarial setting, for which the minimax lower bounds have been established as $\inf_{\hat{f}} \sup_{f, P_{\boldsymbol{\varepsilon}}} R_2(f, \hat{f})  \gtrsim n^{-4/5}$ and $\inf_{\hat{f}} \sup_{f, P_{\boldsymbol{\varepsilon}}} R_\infty(f, \hat{f}) \gtrsim (\log n / n)^{3/4}$~\cite{tsybakov2009introduction}. Combining these with \eqref{eq:lower_adv_2} and \eqref{eq:lower_adv_infty} completes the proof of Theorem~\ref{thm:lower}. 

\begin{figure}[t]
\centering
\begin{tikzpicture}
  \begin{axis}[
    width=7cm,
    height=5cm,
    xlabel={$x$},
    ylabel={$y$},
    domain=0:1,
    samples=500,
    axis lines=middle,
    grid=none,
    xmin=0, xmax=1,
    ymin=0, ymax=0.6,
    xtick={0,0.25,0.5,1},
    xticklabels={$0$, $r_q - \varepsilon_q$, $r_q$, $1$},
    ytick={0,0.25,0.5},
    yticklabels={$0$, $\varepsilon_q$, $r_q$},
    legend pos=north east,
  ]

  % Plot f1(x) = 0 (flat line)
  \addplot[blue, thick] {0};
  \addlegendentry{$f_1(x) = 0$};

  % Plot linear segment of f2(x)
  \addplot[red, thick] coordinates {
    (0, 0.5)
    (0.25, 0.25)
  };

  % Polynomial part (g(x)) from x = 0.25 to 0.5
  \addplot[red, thick, smooth] expression {
    (x <= 0.25) * (0.5 - x) +
    and(x > 0.25, x <= 0.5) * (-768*(x-0.25)^5 + 448*(x-0.25)^4 - 64*(x-0.25)^3 - (x-0.25) + 0.25) +
    (x > 0.5) * 0
  };
  \addlegendentry{$f_2(x)$};

  % Dashed helper lines only at (rq - epsq, epsq)
  \addplot[dashed, gray] coordinates {(0.25, 0) (0.25, 0.25)};
  \addplot[dashed, gray] coordinates {(0, 0.25) (0.25, 0.25)};

  \end{axis}
\end{tikzpicture}
\caption{Construction of functions $f_1$ (blue) and $f_2$ (red) used in Theorem \ref{thm:lower}. Both functions belong to $\mathcal{W}^2([0,1])$, where $f_1(x) = 0$ for all $x$, and $f_2(x)$ differs from $f_1$ only on the interval $[0, r_q]$, with $r_q = q/n$. The function $f_2$ is linear on $[0, r_q - \varepsilon_q]$, where $\varepsilon_q = r_q^2$, and transitions smoothly to zero on $[r_q - \varepsilon_q, r_q]$ via a degree-5 polynomial, ensuring $f_2 \in \mathcal{W}^2([0,1])$. This construction induces a non-zero gap in both $L_2$ and $L_\infty$ norms, while enabling the adversary to obscure the difference by corrupting only $q$ samples, and making $f_1,f_2$ statistically indistinguishable. The details of this construction is provided in Appendix~\ref{Proof:thm:lower}.}\label{indistnguishible_fig}
\end{figure}

\begin{comment}
    \begin{corollary}[Impossibility under Linear Corruption]
If the number of adversarial corruptions satisfies \( q = \Theta(n) \), then for any estimator \( \hat{f} \), the minimax risks \( R_2(f, \hat{f}) \) and \( R_{\infty}(f, \hat{f}) \), as defined in~\eqref{eq:continuous_R2} and~\eqref{eq:continuous_Rinfty}, remain bounded away from zero.  
In particular, no estimator can consistently recover the regression function \( f \) as \( n \to \infty \) under linear-level adversarial contamination.
\end{corollary}

\begin{corollary}[Optimal Robustness up to Polynomial Corruption]
The smoothing spline estimator analyzed in this work tolerates up to \( q = n^{\beta} \) adversarial corruptions for any \( \beta \in [0,1) \), while still achieving vanishing minimax risk.  
Hence, it matches the optimal corruption tolerance achievable by any estimator in the minimax sense.
\end{corollary}
\end{comment}

\begin{corollary}\label{cor:lower}
Assuming $q = \Theta(n^\beta)$ for some $\beta \in [0,1]$, we conclude from Theorem \ref{thm:lower} that
\begin{align}
 \inf_{\hat{f}} \sup_{f, \mathcal{S}, P_{\boldsymbol{\varepsilon}}} R_2(f, \hat{f}) )  \geq 
\begin{cases}
    \mathcal{O}\left(n^{-4/5}\right) & \text{for } \beta \leq \frac{11}{15}, \\
    \mathcal{O}\left(n^{-3(1-\beta)}\right)  & \text{for } \beta > \frac{11}{15},
\end{cases}
\\
 \inf_{\hat{f}} \sup_{f, \mathcal{S}, P_{\boldsymbol{\varepsilon}}} R_\infty(f, \hat{f})   \geq 
\begin{cases}
    \mathcal{\tilde{O}}\left(n^{-3/4}\right) & \text{for } \beta \leq \frac{5}{8}, \\
    \mathcal{\tilde{O}}\left(n^{-2(1-\beta)}\right)  & \text{for } \beta > \frac{5}{8},
\end{cases}
\end{align}
as depicted by red curves in Figure \ref{fig:rate-summary}.
    
\end{corollary}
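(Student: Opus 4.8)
The plan is to obtain Corollary~\ref{cor:lower} by directly substituting the asymptotic relation $q = \Theta(n^\beta)$ into the two lower bounds established in Theorem~\ref{thm:lower}, and then, for each metric separately, identifying the range of $\beta$ in which each of the two summands dominates. The only mild technical point is to keep track of the polylogarithmic factor appearing in the clean-data term of the $R_\infty$ bound, which is what forces the use of $\tilde{\mathcal{O}}$ rather than $\mathcal{O}$ in the corresponding branch.

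First, for the $L_2$ metric: Theorem~\ref{thm:lower} gives $\inf_{\hat f}\sup_{f,\mathcal{S},P_{\boldsymbol{\varepsilon}}} R_2(f,\hat f) \gtrsim (q/n)^3 + n^{-4/5}$. Under $q = \Theta(n^\beta)$ we have $(q/n)^3 = \Theta(n^{-3(1-\beta)})$, so the bound reads $\gtrsim n^{-3(1-\beta)} + n^{-4/5}$. Since a sum of two nonnegative quantities is of the order of the larger one, it suffices to compare the exponents $3(1-\beta)$ and $4/5$: one checks that $3(1-\beta) \le 4/5 \iff 1-\beta \le 4/15 \iff \beta \ge 11/15$. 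Hence the noise term $n^{-4/5}$ dominates for $\beta \le 11/15$ and the adversarial term $n^{-3(1-\beta)}$ dominates for $\beta > 11/15$, which is exactly the claimed piecewise expression.

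Second, for the $L_\infty$ metric: Theorem~\ref{thm:lower} gives $\inf_{\hat f}\sup_{f,\mathcal{S},P_{\boldsymbol{\varepsilon}}} R_\infty(f,\hat f) \gtrsim (q/n)^2 + (\log n / n)^{3/4}$, which under $q = \Theta(n^\beta)$ becomes $\gtrsim n^{-2(1-\beta)} + (\log n)^{3/4} n^{-3/4}$. Disregarding the $(\log n)^{3/4}$ factor and comparing $2(1-\beta)$ with $3/4$, we find $2(1-\beta) \le 3/4 \iff 1-\beta \le 3/8 \iff \beta \ge 5/8$. Thus for $\beta \le 5/8$ the clean-data term, of order $\tilde{\Theta}(n^{-3/4})$, dominates, and for $\beta > 5/8$ the adversarial term $n^{-2(1-\beta)}$ dominates; the $\tilde{\mathcal{O}}$ in the statement simply absorbs the $(\log n)^{3/4}$ factor carried by the first term.

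There is essentially no hard step in this argument; it is a routine order-of-magnitude comparison. The two points that merit a line of care are (i) reporting the polylog factor correctly via $\tilde{\mathcal{O}}$ in the first branch of the $R_\infty$ bound, and (ii) the behavior exactly at the crossover points $\beta = 11/15$ and $\beta = 5/8$, where the two terms have the same order and either formula is valid; we adopt the convention in the statement of associating equality with the first branch. The resulting exponents trace precisely the red curves plotted in Figures~\ref{fig:rate-region_r2} and~\ref{fig:rate-region_rinfty}.
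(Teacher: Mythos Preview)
Your proposal is correct and matches the paper's approach: the corollary is stated in the paper without an explicit proof, being an immediate consequence of substituting $q=\Theta(n^\beta)$ into the two-term lower bounds of Theorem~\ref{thm:lower} and comparing exponents, exactly as you do. Your handling of the crossover points and the polylogarithmic factor via $\tilde{\mathcal{O}}$ is also appropriate.
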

\begin{corollary}[On optimality of Smoothing Spline]\label{cor:optimality of smoothing}
Assume the conditions of Theorem~\ref{thm:upper} hold and that $q = o(n)$. Then, by selecting the smoothing parameter $\lambda = (\frac{q}{n})^{4/3}$ when $q \geq n^{0.4}$ and $\lambda = n^{-0.8}$ when $q < n^{0.4}$, $R_2(f, \hat{f}^a_{\mathrm{SS}})$ vanishes as $n \to \infty$. Similarly, for $R_\infty(f, \hat{f}^a_{\mathrm{SS}})$, setting $\lambda = (\frac{q}{n})^{6/5}$ when $q \geq n^{0.5}$ and $\lambda = n^{-0.8}$ when $q < n^{0.5}$ ensures that $R_\infty(f, \hat{f}^a_{\mathrm{SS}})$ also converges to zero. Consequently, as long as $q = o(n)$, then  $R_2(f, \hat{f}^a_{\mathrm{SS}})$ and $R_\infty(f, \hat{f}^a_{\mathrm{SS}})$ go to zero, as $n \to \infty$. 
Conversely, if $q = \Theta(n)$, according to Corollary~\ref{cor:lower}, there exists a function $f \in \mathcal{W}^2(\Omega)$ such that, for any estimator $\hat{f}$, \emph{none} of $R_2(f, \hat{f})$ and $R_\infty(f, \hat{f})$  converges to zero as $n \to \infty$.  This implies that the classical cubic smoothing spline estimator is optimal with respect to maximum tolerable number of adversarial corruptions.
\end{corollary}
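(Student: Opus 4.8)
The plan is to deduce Corollary~\ref{cor:optimality of smoothing} directly from the two main results already in hand: the achievability half follows by substituting the prescribed smoothing parameters into Theorem~\ref{thm:upper}, and the converse half follows by specializing Corollary~\ref{cor:lower} (equivalently Theorem~\ref{thm:lower}) to $q=\Theta(n)$. So the actual work is reduced to (i) checking that each prescribed $\lambda$ both satisfies the hypotheses of Theorem~\ref{thm:upper} and drives every error term to zero, and (ii) observing that the minimax lower bound is bounded away from zero in the linear-corruption regime.

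For achievability I would fix $f\in\mathcal W^2(\Omega)$ and split into the regimes used in the statement. For $R_2$: if $q<n^{0.4}$, take $\lambda=n^{-0.8}$, so the three terms of \eqref{eq:upper_bound_R2} become $n^{-4/5}\int_\Omega(f'')^2$, $\sigma^2 n^{-4/5}$, and $q^2(M^2+\sigma^2)n^{-8/5}\le (M^2+\sigma^2)n^{-4/5}$, all vanishing; if $q\ge n^{0.4}$, take $\lambda=(q/n)^{4/3}$, so the terms become $(q/n)^{4/3}\int_\Omega(f'')^2$, $\sigma^2 n^{-2/3}q^{-1/3}\le\sigma^2 n^{-4/5}$ (using $q\ge n^{0.4}$), and $(M^2+\sigma^2)(q/n)^{4/3}$, again all $\to 0$ since $q=o(n)$. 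For $R_\infty$ I would first rewrite the bound of \eqref{eq:upper_bound_Rinfty} as $\lambda^{3/4}\int_\Omega(f'')^2+\sigma^2 n^{-1}\lambda^{-1/2}+q^2(M^2+\sigma^2)n^{-2}\lambda^{-1/2}$, and then with $\lambda=n^{-0.8}$ for $q<n^{0.5}$ (giving $n^{-0.6}\int_\Omega(f'')^2+\sigma^2 n^{-0.6}+q^2(M^2+\sigma^2)n^{-1.6}$, and $q^2<n$ closes the last term) and with $\lambda=(q/n)^{6/5}$ for $q\ge n^{0.5}$ (giving $(q/n)^{9/10}\int_\Omega(f'')^2+\sigma^2 n^{-2/5}q^{-3/5}\le\sigma^2 n^{-7/10}$ and $(M^2+\sigma^2)(q/n)^{7/5}$) every term vanishes as $n\to\infty$. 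I would also check the admissibility conditions of Theorem~\ref{thm:upper}: $\lambda\to 0$ holds because $q=o(n)$, and $\lambda>n^{-2}$ holds since $n^{-0.8}>n^{-2}$ and $(q/n)^{4/3},(q/n)^{6/5}>n^{-2}$ whenever $q\ge 1$.

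For the converse I would apply Corollary~\ref{cor:lower} with $\beta=1$ (i.e. $q=\Theta(n)$): then $(q/n)^3=\Theta(1)$ and $(q/n)^2=\Theta(1)$, so $\inf_{\hat f}\sup_{f,\mathcal S,P_{\boldsymbol\varepsilon}}R_2(f,\hat f)\gtrsim 1$ and $\inf_{\hat f}\sup_{f,\mathcal S,P_{\boldsymbol\varepsilon}}R_\infty(f,\hat f)\gtrsim 1$, with the constant independent of $n$. Hence no estimator (no sequence of estimators indexed by $n$) can make either worst-case risk tend to zero; unwinding the two-point construction of Figure~\ref{indistnguishible_fig}, which pairs the $n$-independent function $f_1=0$ with an $n$-dependent $f_2$ at $L_2$/$L_\infty$ separation $\asymp (q/n)^{3/2}$, $\asymp q/n$, one of the two candidates always incurs $\Omega(1)$ error, so there is $f\in\mathcal W^2(\Omega)$ witnessing inconsistency. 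Putting the two halves together: the smoothing spline attains vanishing error for every $q=o(n)$ while $q=\Theta(n)$ is impossible for any estimator, so $q=o(n)$ is exactly the robustness threshold and the smoothing spline is optimal in the maximum tolerable number of corruptions, which is Corollary~\ref{cor:optimality of smoothing}.

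The only mildly delicate point is the joint bookkeeping in the achievability part: one must verify that, in each of the four (metric, regime) cases, a single prescribed $\lambda$ simultaneously lies in the admissible window $(n^{-2},o(1))$ required by Theorem~\ref{thm:upper} and sends the smoothing-bias term, the noise/variance term, and the adversarial-deviation term all to zero, and that the crossover exponents $0.4$ and $0.5$ are precisely where the noise-dominated choice $\lambda=n^{-0.8}$ ceases to outperform the adversary-dominated choices $\lambda=(q/n)^{4/3}$ and $\lambda=(q/n)^{6/5}$. On the converse side the only subtlety is the order of quantifiers, which is handled by the presence of the fixed function $f_1=0$ in the two-point family.
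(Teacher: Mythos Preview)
Your proposal is correct and follows exactly the route the paper intends: the corollary is stated without a separate proof, and your argument makes explicit the implicit reasoning, namely plugging the prescribed $\lambda$ into the three-term bounds \eqref{eq:upper_bound_R2}--\eqref{eq:upper_bound_Rinfty} of Theorem~\ref{thm:upper} (verifying the admissibility window $n^{-2}<\lambda\to 0$), and invoking Theorem~\ref{thm:lower}/Corollary~\ref{cor:lower} at $q=\Theta(n)$ for the converse. Your term-by-term calculations are accurate, and your remark that $f_1\equiv 0$ is the fixed witness in the two-point construction correctly addresses the quantifier issue in the impossibility half.
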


\begin{comment}
\begin{remark}\label{rem:achiveablity}
Let $q = \Theta(n^\beta)$ for some $\beta \in [0,1)$. Corollaries~\ref{cor:r2_convrate} and \ref{cor:rinfty_convrate} show that the second-order smoothing spline estimator retains adversarial robustness: both $R_2(f, \hat{f}^{\,a}_{\mathrm{SS}})$ and $R_\infty(f, \hat{f}^{\,a}_{\mathrm{SS}})$ vanish as $n \to \infty$. The convergence rates for these metrics are illustrated by the blue curves in Figure~\ref{fig:rate-summary}.
\end{remark}
\begin{remark}
Based on  Remarks \ref{rem:achiveablity} and \ref{rem:ipmossibility}, it is perhaps surprising that the classical smoothing spline estimator is  optimal with respect to the number of adversarial corruptions it can tolerate.
\end{remark}
\end{comment}

\begin{figure}[t]
  \centering
  \begin{subfigure}[b]{0.49\textwidth}
    \includegraphics[width=1\linewidth]{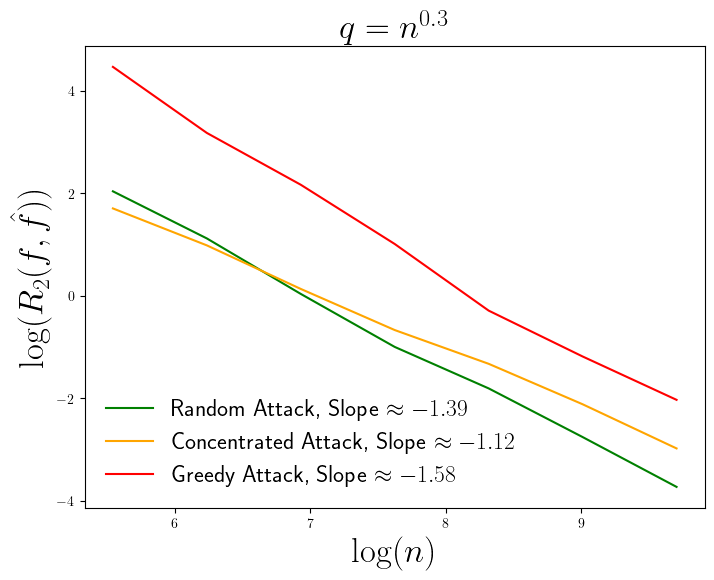}
    % \caption{}
  \end{subfigure}
  \hfill
  \begin{subfigure}[b]{0.49\textwidth}
    \includegraphics[width=1\linewidth]{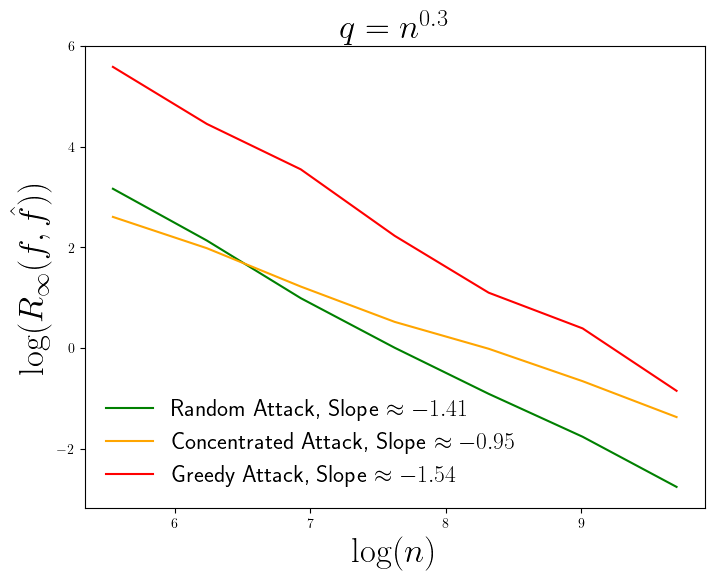}
    % \caption{}
  \end{subfigure}
  
  \vspace{0.1cm}
  
  \begin{subfigure}[b]{0.49\textwidth}
    \includegraphics[width=1\linewidth]{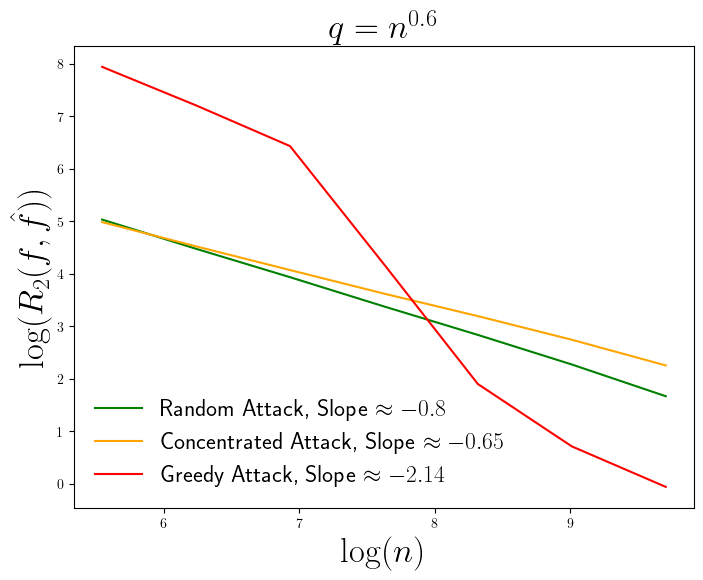}
    % \caption{}
  \end{subfigure}
  \hfill
  \begin{subfigure}[b]{0.49\textwidth}
    \includegraphics[width=1\linewidth]{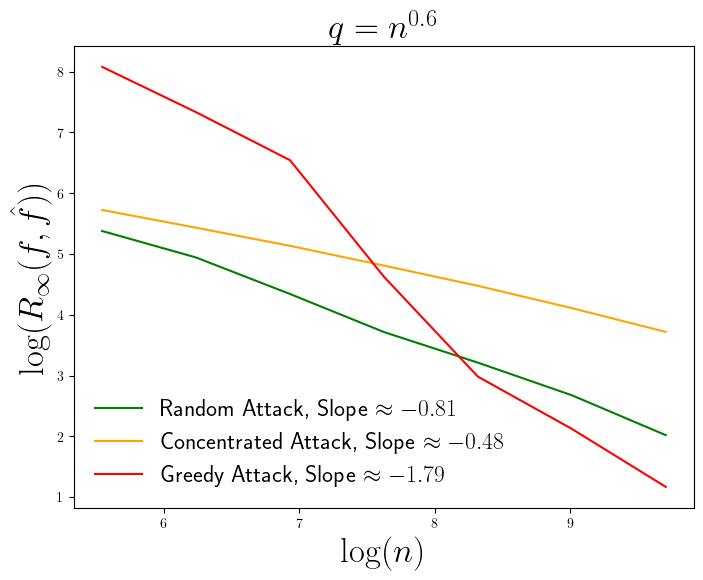}
    % \caption{}
  \end{subfigure}
  \caption{Log-log plots of error convergence rates for the cubic smoothing spline estimator $\hat{f} = \hat{f}^a_{\mathrm{SS}}$ for $f(x) = x\sin(x)$ in the uniform design setting, where the input points converge to a uniform distribution. The top row shows $R_2(f, \hat{f})$ and $R_\infty(f, \hat{f})$ errors for $q = n^{0.3}$, along with the corresponding theoretical upper bounds of $\mathcal{O}(n^{-0.8})$ and $\mathcal{O}(n^{-0.6})$, respectively. The bottom row presents $R_2(f, \hat{f})$ and $R_\infty(f, \hat{f})$ errors for $q = n^{0.6}$, with theoretical upper bounds of $\mathcal{O}(n^{-0.53})$ and $\mathcal{O}(n^{-0.48})$, respectively.}
  \label{fig:conv_rate}
\end{figure}

\section{Experimental Results}
\label{Sec:Experiments}
In this section, we present numerical experiments to validate the theoretical results. 
All experiments are conducted on a single CPU-only machine. 
The smoothing spline estimator is implemented using the \texttt{SciPy} package~\cite{virtanen2020scipy}. 
We consider two regression functions: (i) \( f(x) = x \sin(x) \) over the domain \( \Omega = [-10, 10] \) with \( M = 100 \), and (ii) a three-layer MLP network with weights initialized in \( [-1, 1] \) and \( M = 500 \). 
The noise vector \( \boldsymbol{\varepsilon} \) is drawn independently from a Gaussian distribution with zero mean and variance \( \sigma^2 = 1 \).

To evaluate the adversarial robustness of the cubic smoothing spline estimator, we consider three distinct attack strategies:
\begin{itemize}
    \item \textbf{Random Corruption Attack:} The adversary randomly selects $q$ out of the $n$ samples and replaces their response values with $M$.

    \item  \textbf{Greedy Corruption Attack:} In this process, the attacker:
        \begin{enumerate}
            \item Fits the baseline estimator on clean data.
            \item Computes $\ell_i = (\hat{f}(x_i) - y_i)^2$ for each sample.
            \item Identifies $i^\star = \arg\min_i \ell_i$.
            \item Updates $y_{i^\star} \leftarrow y_{i^\star} + M \cdot \mathrm{sign}\!(\hat{f}(x_{i^\star}) - y_{i^\star})$.
            \item Repeats the process until $q$ points are corrupted.
        \end{enumerate}
    
    \item \textbf{Concentrated Corruption Attack:} The adversary targets $q$ consecutive samples centered around the median of the design points and modifies their corresponding labels to $M$.

\end{itemize}

For each attack strategy, we evaluate both \( R_2(f, \hat{f}^{\,a}_{\mathrm{SS}}) \) and \( R_\infty(f, \hat{f}^{\,a}_{\mathrm{SS}}) \) across a range of sample sizes \( n \), and examine how these metrics scale with \( n \) under varying levels of adversarial corruption. 
Additionally, for each experiment, we consider two settings for the design points: uniform and Gaussian. 
In the uniform and Gaussian settings, the design points \( \{x_i\}_{i=1}^n \) converge to a uniform and a truncated Gaussian distribution over \( \Omega \), respectively, as \( n \to \infty \). 

For the function \( f(x) = x \sin(x) \), Figures~\ref{fig:conv_rate} and~\ref{fig:conv_rate_gauss} illustrate the behavior of \( R_2(f, \hat{f}^{\,a}_{\mathrm{SS}}) \) and \( R_\infty(f, \hat{f}^{\,a}_{\mathrm{SS}}) \) under uniform and Gaussian designs, respectively, for two corruption levels, \( q = n^{0.3} \) and \( q = n^{0.6} \). 
Similarly, for the MLP network, Figures~\ref{fig:conv_rate_nn} and~\ref{fig:conv_rate_gauss_nn} present the corresponding results. 

As shown in these figures, the empirical convergence rates align well with the theoretical upper bounds established in Theorem~\ref{thm:upper}. 
Specifically, Theorem~\ref{thm:upper} establishes that 
\( R_2(f, \hat{f}^{\,a}_{\mathrm{SS}}) \leq \mathcal{O}(n^{-0.8}) \) and 
\( R_\infty(f, \hat{f}^{\,a}_{\mathrm{SS}}) \leq \mathcal{O}(n^{-0.6}) \) for \( q = n^{0.3} \), 
and 
\( R_2(f, \hat{f}^{\,a}_{\mathrm{SS}}) \leq \mathcal{O}(n^{-0.53}) \) and 
\( R_\infty(f, \hat{f}^{\,a}_{\mathrm{SS}}) \leq \mathcal{O}(n^{-0.48}) \) for \( q = n^{0.6} \). 
These theoretical predictions align with the empirical convergence trends observed in the figures. Moreover, these figures show that the concentrated attack results in noticeably higher estimation error compared to the other two attack strategies.

It is important to note that these empirical rates are not expected to match the lower bounds from Theorem~\ref{thm:lower}, since those bounds are minimax in nature. That is, they guarantee the existence of a worst-case function $f^\star \in \mathcal{W}^2(\Omega)$ for which no estimator can achieve faster convergence. Therefore, the lower bounds apply to such worst-case functions and not necessarily to all functions in $\mathcal{W}^2(\Omega)$, including the two functions in our experiments.

\begin{figure}[t]
  \centering
  \begin{subfigure}[b]{0.49\textwidth}
    \includegraphics[width=1\linewidth]{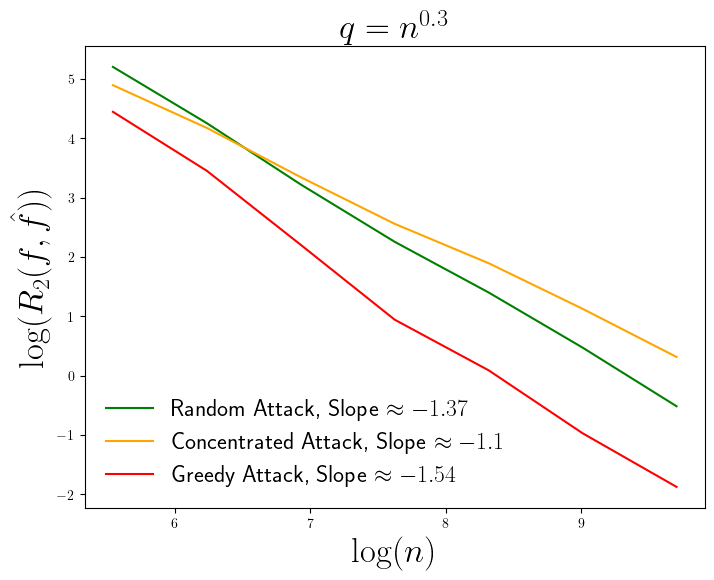}
    % \caption{}
  \end{subfigure}
  \hfill
  \begin{subfigure}[b]{0.49\textwidth}
    \includegraphics[width=1\linewidth]{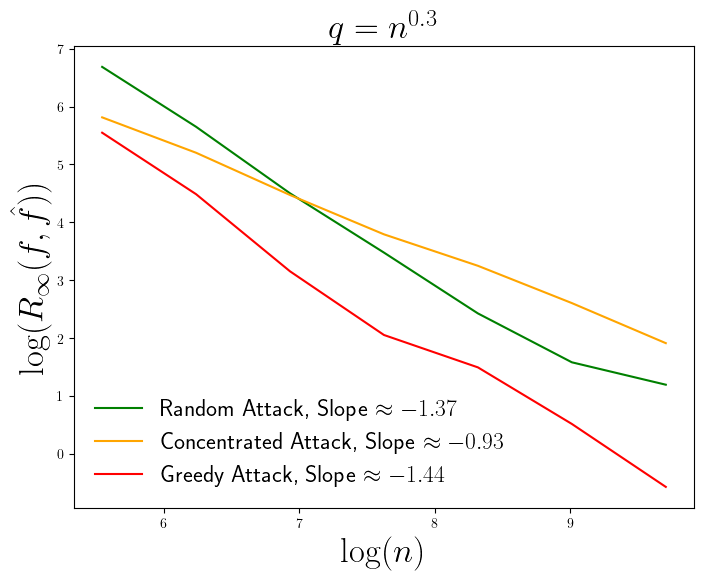}
    % \caption{}
  \end{subfigure}
  
  \vspace{0.1cm}
  
  \begin{subfigure}[b]{0.49\textwidth}
    \includegraphics[width=1\linewidth]{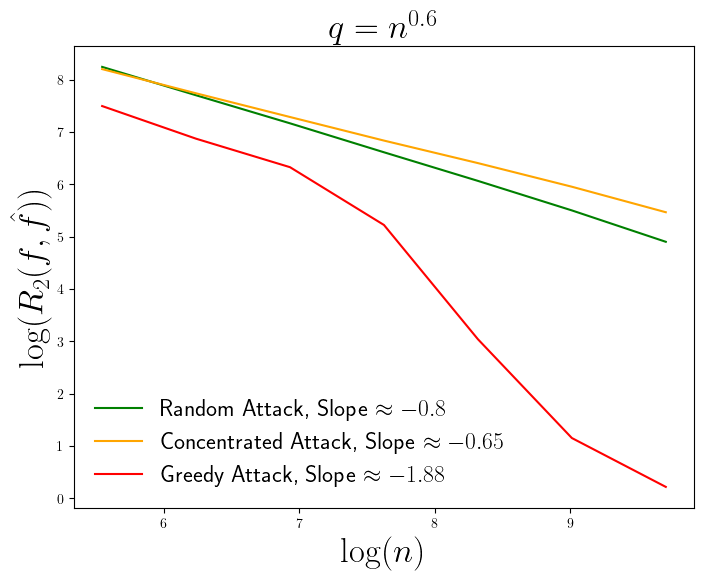}
    % \caption{}
  \end{subfigure}
  \hfill
  \begin{subfigure}[b]{0.49\textwidth}
    \includegraphics[width=1\linewidth]{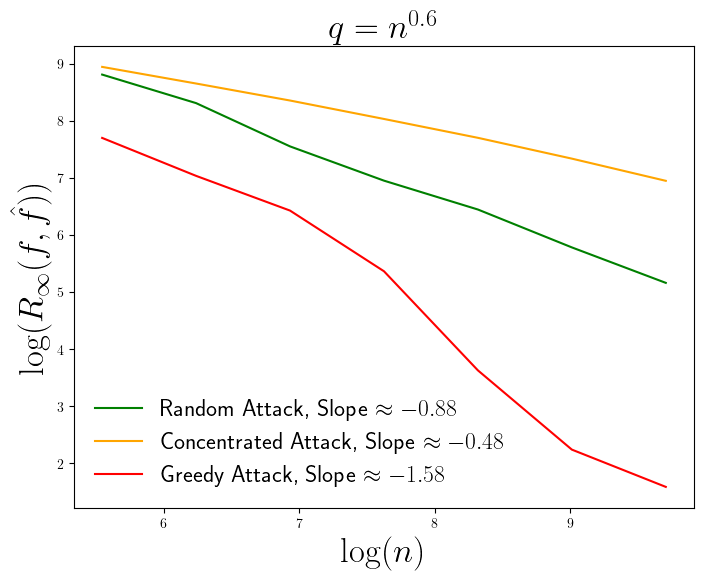}
    % \caption{}
  \end{subfigure}
  \caption{Log–log plots showing the convergence behavior of the cubic smoothing spline estimator \( \hat{f} = \hat{f}^{\,a}_{\mathrm{SS}} \) when the ground-truth function is the MLP network, under the uniform design setting. 
The top row corresponds to the case \( q = n^{0.3} \), with theoretical convergence rates of \( \mathcal{O}(n^{-0.8}) \) for \( R_2(f, \hat{f}) \) and \( \mathcal{O}(n^{-0.6}) \) for \( R_\infty(f, \hat{f}) \). 
The bottom row shows results for a higher corruption level, \( q = n^{0.6} \), with respective theoretical upper bounds of \( \mathcal{O}(n^{-0.53}) \) and \( \mathcal{O}(n^{-0.48}) \).
}
  \label{fig:conv_rate_nn}
\end{figure}

\section{Related Work}
\label{Sec:Related_Works}

Unlike parametric regression, which benefits from structural assumptions on the model class, nonparametric regression imposes minimal assumptions on the underlying function. This flexibility, makes it substantially more challenging to evaluate and guarantee adversarial robustness. Consequently, the literature on adversarial robustness in nonparametric settings remains relatively sparse. Nonetheless, several notable efforts have begun to address this gap.

As discussed earlier, the Nadaraya–Watson (NW) estimator is not robust to adversarial corruption and can fail even in the presence of a single corrupted sample~\cite{dhar2022trimmed, zhao2024robust}. 
Classical robust estimation techniques, such as the Median-of-Means (MoM) estimator~\cite{nemirovskij1983problem} and trimmed means~\cite{welsh1987trimmed}, have been extended to nonparametric settings~\cite{humbert2022robust,dhar2022trimmed,ben2023robust} to improve the robustness of the NW estimator. 
In the MoM approach, the data are partitioned into several groups, an NW estimator is fitted to each group, and the median of the resulting estimates is taken. 
While this method enhances robustness to outliers, its performance degrades sharply when even a single corrupted sample appears in each group. 
Trimmed-mean methods, on the other hand, discard a fixed fraction of samples with extreme response values and fit the NW estimator on the remaining data. 
However, their effectiveness is limited when adversarial corruption is not uniformly distributed across the input space.

Zhao et al.~\cite{zhao2024robust} study adversarial robustness in kernel-based nonparametric regression by analyzing an \( M \)-estimator variant of the Nadaraya–Watson (NW) estimator~\cite{nadaraya1964estimating, watson1964smooth}, deriving upper and minimax lower bounds for metrics based on $L_2$-norm and $L_\infty$-norm. In comparison to our setting, which assumes the regression function lies in a second-order Sobolev space, their work considers a first-order Hölder class. Their proposed estimator requires gradient descent with $\mathcal{O}(n \log(1/\epsilon))$ complexity to produce an estimation with precision $\epsilon$ on new data point. In contrast, the cubic smoothing spline accurately evaluates new data point in $\mathcal{O}(n)$ time, offering greater computational efficiency.

Several works also have studied the robustness of nonparametric classification. In~\cite{dohmatob2024consistent}, the authors analyze the robustness of nonparametric linear classifiers under arbitrary norms and mild regularity assumptions.  The robustness of nearest neighbor classifiers against adversarial perturbations has been studied in \cite{wang2018analyzing} and a general attack framework applicable to a wide class of nonparametric classifiers is introduced in \cite{yang2020robustness}  and a data-pruning defense strategy to mitigate such attacks is proposed. 

\section{Conclusion}
\label{Sec:Conclution}
In this paper, we study the adversarial robustness of nonparametric regression when the underlying regression function belongs to $\mathcal{W}^2(\Omega)$. We prove that the cubic smoothing spline achieves vanishing $R_2$ and $R_\infty$ errors as long as the number of corrupted samples satisfies $q = o(n)$. We also establish lower bounds using a minimax argument. Notably, we show that cubic smoothing splines are optimal with respect to the maximum number of tolerable adversarial corruptions.
\section*{Acknowledgment}
This material is based upon work supported by the National
Science Foundation under Grant CIF-2348638.

%%%%%%%%%%%%%%%%%%%%%%%%%%%%%%%%%%%%%%%%%%%%%%%%%%%%%%%%%%%%
\bibliographystyle{unsrtnat}

\begin{thebibliography}{68}
\providecommand{\natexlab}[1]{#1}
\providecommand{\url}[1]{\texttt{#1}}
\expandafter\ifx\csname urlstyle\endcsname\relax
  \providecommand{\doi}[1]{doi: #1}\else
  \providecommand{\doi}{doi: \begingroup \urlstyle{rm}\Url}\fi

\bibitem[Dean et~al.(2012)Dean, Corrado, Monga, Chen, Devin, Mao, Ranzato, Senior, Tucker, Yang, et~al.]{dean2012large}
Jeffrey Dean, Greg Corrado, Rajat Monga, Kai Chen, Matthieu Devin, Mark Mao, Marc'aurelio Ranzato, Andrew Senior, Paul Tucker, Ke~Yang, et~al.
\newblock Large scale distributed deep networks.
\newblock \emph{Advances in neural information processing systems}, 25, 2012.

\bibitem[McMahan et~al.(2017)McMahan, Moore, Ramage, Hampson, and y~Arcas]{mcmahan2017communication}
Brendan McMahan, Eider Moore, Daniel Ramage, Seth Hampson, and Blaise~Aguera y~Arcas.
\newblock Communication-efficient learning of deep networks from decentralized data.
\newblock In \emph{Artificial intelligence and statistics}, pages 1273--1282. PMLR, 2017.

\bibitem[You et~al.(2018)You, Zhang, Hsieh, Demmel, and Keutzer]{you2018imagenet}
Yang You, Zhao Zhang, Cho-Jui Hsieh, James Demmel, and Kurt Keutzer.
\newblock Imagenet training in minutes.
\newblock In \emph{Proceedings of the 47th international conference on parallel processing}, pages 1--10, 2018.

\bibitem[Kone{\v{c}}n{\`y} et~al.(2015)Kone{\v{c}}n{\`y}, McMahan, and Ramage]{konevcny2015federated}
Jakub Kone{\v{c}}n{\`y}, Brendan McMahan, and Daniel Ramage.
\newblock Federated optimization: Distributed optimization beyond the datacenter.
\newblock \emph{arXiv preprint arXiv:1511.03575}, 2015.

\bibitem[Shoeybi et~al.(2019)Shoeybi, Patwary, Puri, LeGresley, Casper, and Catanzaro]{shoeybi2019megatron}
Mohammad Shoeybi, Mostofa Patwary, Raul Puri, Patrick LeGresley, Jared Casper, and Bryan Catanzaro.
\newblock Megatron-lm: Training multi-billion parameter language models using model parallelism.
\newblock \emph{arXiv preprint arXiv:1909.08053}, 2019.

\bibitem[Bonawitz et~al.(2017)Bonawitz, Ivanov, Kreuter, Marcedone, McMahan, Patel, Ramage, Segal, and Seth]{bonawitz2017practical}
Keith Bonawitz, Vladimir Ivanov, Ben Kreuter, Antonio Marcedone, H~Brendan McMahan, Sarvar Patel, Daniel Ramage, Aaron Segal, and Karn Seth.
\newblock Practical secure aggregation for privacy-preserving machine learning.
\newblock In \emph{proceedings of the 2017 ACM SIGSAC Conference on Computer and Communications Security}, pages 1175--1191, 2017.

\bibitem[Nodehi et~al.(2024{\natexlab{a}})Nodehi, Cadambe, and Maddah-Ali]{nodehi2024game_sybil}
Hanzaleh~Akbari Nodehi, Viveck~R Cadambe, and Mohammad~Ali Maddah-Ali.
\newblock Game of coding: Sybil resistant decentralized machine learning with minimal trust assumption.
\newblock \emph{arXiv preprint arXiv:2410.05540}, 2024{\natexlab{a}}.

\bibitem[Xiong et~al.(2024)Xiong, Tegegn, Sarin, Pal, and Rubin]{xiong2024all}
Peiyu Xiong, Michael Tegegn, Jaskeerat~Singh Sarin, Shubhraneel Pal, and Julia Rubin.
\newblock It is all about data: A survey on the effects of data on adversarial robustness.
\newblock \emph{ACM Computing Surveys}, 56\penalty0 (7):\penalty0 1--41, 2024.

\bibitem[Szegedy et~al.(2013)Szegedy, Zaremba, Sutskever, Bruna, Erhan, Goodfellow, and Fergus]{szegedy2013intriguing}
Christian Szegedy, Wojciech Zaremba, Ilya Sutskever, Joan Bruna, Dumitru Erhan, Ian Goodfellow, and Rob Fergus.
\newblock Intriguing properties of neural networks.
\newblock \emph{arXiv preprint arXiv:1312.6199}, 2013.

\bibitem[Goodfellow et~al.(2014)Goodfellow, Shlens, and Szegedy]{goodfellow2014explaining}
Ian~J Goodfellow, Jonathon Shlens, and Christian Szegedy.
\newblock Explaining and harnessing adversarial examples.
\newblock \emph{arXiv preprint arXiv:1412.6572}, 2014.

\bibitem[Tian et~al.(2022)Tian, Cui, Liang, and Yu]{tian2022comprehensive}
Zhiyi Tian, Lei Cui, Jie Liang, and Shui Yu.
\newblock A comprehensive survey on poisoning attacks and countermeasures in machine learning.
\newblock \emph{ACM Computing Surveys}, 55\penalty0 (8):\penalty0 1--35, 2022.

\bibitem[Carlini and Wagner(2017)]{carlini2017towards}
Nicholas Carlini and David Wagner.
\newblock Towards evaluating the robustness of neural networks.
\newblock In \emph{2017 ieee symposium on security and privacy (sp)}, pages 39--57. Ieee, 2017.

\bibitem[Shi and Liu(2023)]{shi2023adversarial}
Lianghe Shi and Weiwei Liu.
\newblock Adversarial self-training improves robustness and generalization for gradual domain adaptation.
\newblock \emph{Advances in Neural Information Processing Systems}, 36:\penalty0 37321--37333, 2023.

\bibitem[Bai et~al.(2021)Bai, Luo, Zhao, Wen, and Wang]{bai2021recent}
Tao Bai, Jinqi Luo, Jun Zhao, Bihan Wen, and Qian Wang.
\newblock Recent advances in adversarial training for adversarial robustness.
\newblock \emph{arXiv preprint arXiv:2102.01356}, 2021.

\bibitem[Van~Rooyen and Williamson(2018)]{van2018theory}
Brendan Van~Rooyen and Robert~C Williamson.
\newblock A theory of learning with corrupted labels.
\newblock \emph{Journal of Machine Learning Research}, 18\penalty0 (228):\penalty0 1--50, 2018.

\bibitem[Sukhbaatar and Fergus(2014)]{sukhbaatar2014learning}
Sainbayar Sukhbaatar and Rob Fergus.
\newblock Learning from noisy labels with deep neural networks.
\newblock \emph{arXiv preprint arXiv:1406.2080}, 2\penalty0 (3):\penalty0 4, 2014.

\bibitem[Madry et~al.(2017)Madry, Makelov, Schmidt, Tsipras, and Vladu]{madry2017towards}
Aleksander Madry, Aleksandar Makelov, Ludwig Schmidt, Dimitris Tsipras, and Adrian Vladu.
\newblock Towards deep learning models resistant to adversarial attacks.
\newblock \emph{arXiv preprint arXiv:1706.06083}, 2017.

\bibitem[Blanchard et~al.(2017)Blanchard, El~Mhamdi, Guerraoui, and Stainer]{blanchard2017machine}
Peva Blanchard, El~Mahdi El~Mhamdi, Rachid Guerraoui, and Julien Stainer.
\newblock Machine learning with adversaries: Byzantine tolerant gradient descent.
\newblock \emph{Advances in neural information processing systems}, 30, 2017.

\bibitem[Yin et~al.(2018)Yin, Chen, Kannan, and Bartlett]{yin2018byzantine}
Dong Yin, Yudong Chen, Ramchandran Kannan, and Peter Bartlett.
\newblock Byzantine-robust distributed learning: Towards optimal statistical rates.
\newblock In \emph{International conference on machine learning}, pages 5650--5659. Pmlr, 2018.

\bibitem[Moradi et~al.(2025)Moradi, Akbarinodehi, and Maddah-Ali]{moradi2025general}
Parsa Moradi, Hanzaleh Akbarinodehi, and Mohammad~Ali Maddah-Ali.
\newblock General coded computing: Adversarial settings.
\newblock \emph{arXiv preprint arXiv:2502.08058}, 2025.

\bibitem[Nodehi et~al.(2024{\natexlab{b}})Nodehi, Cadambe, and Maddah-Ali]{nodehi2024game}
Hanzaleh~Akbari Nodehi, Viveck~R Cadambe, and Mohammad~Ali Maddah-Ali.
\newblock Game of coding: Beyond trusted majorities.
\newblock In \emph{2024 IEEE International Symposium on Information Theory (ISIT)}, pages 2850--2855. IEEE, 2024{\natexlab{b}}.

\bibitem[H{\"a}rdle and Mammen(1993)]{hardle1993comparing}
Wolfgang H{\"a}rdle and Enno Mammen.
\newblock Comparing nonparametric versus parametric regression fits.
\newblock \emph{The Annals of Statistics}, pages 1926--1947, 1993.

\bibitem[H{\"a}rdle(1990)]{hardle1990applied}
Wolfgang H{\"a}rdle.
\newblock \emph{Applied nonparametric regression}.
\newblock Cambridge university press, 1990.

\bibitem[Tsybakov(2009)]{tsybakov2009introduction}
Alexandre~B. Tsybakov.
\newblock \emph{Introduction to Nonparametric Estimation}.
\newblock Springer Series in Statistics. Springer, New York, 2009.

\bibitem[Dan et~al.(2020)Dan, Wei, and Ravikumar]{dan2020sharp}
Chen Dan, Yuting Wei, and Pradeep Ravikumar.
\newblock Sharp statistical guaratees for adversarially robust gaussian classification.
\newblock In \emph{International Conference on Machine Learning}, pages 2345--2355. PMLR, 2020.

\bibitem[Xing et~al.(2021)Xing, Zhang, and Cheng]{xing2021adversarially}
Yue Xing, Ruizhi Zhang, and Guang Cheng.
\newblock Adversarially robust estimate and risk analysis in linear regression.
\newblock In \emph{International Conference on Artificial Intelligence and Statistics}, pages 514--522. PMLR, 2021.

\bibitem[Cohen et~al.(2019)Cohen, Rosenfeld, and Kolter]{cohen2019certified}
Jeremy Cohen, Elan Rosenfeld, and Zico Kolter.
\newblock Certified adversarial robustness via randomized smoothing.
\newblock In \emph{international conference on machine learning}, pages 1310--1320. PMLR, 2019.

\bibitem[Rekavandi et~al.(2024)Rekavandi, Farokhi, Ohrimenko, and Rubinstein]{rekavandi2024certified}
Aref Rekavandi, Farhad Farokhi, Olga Ohrimenko, and Benjamin Rubinstein.
\newblock Certified adversarial robustness via randomized $\alpha$-smoothing for regression models.
\newblock \emph{Advances in Neural Information Processing Systems}, 37:\penalty0 134127--134150, 2024.

\bibitem[Xie and Huo(2024)]{xie2024high}
Yiling Xie and Xiaoming Huo.
\newblock High-dimensional (group) adversarial training in linear regression.
\newblock \emph{arXiv preprint arXiv:2405.13940}, 2024.

\bibitem[Ribeiro et~al.(2023)Ribeiro, Zachariah, Bach, and Sch{\"o}n]{ribeiro2023regularization}
Antonio Ribeiro, Dave Zachariah, Francis Bach, and Thomas Sch{\"o}n.
\newblock Regularization properties of adversarially-trained linear regression.
\newblock \emph{Advances in Neural Information Processing Systems}, 36:\penalty0 23658--23670, 2023.

\bibitem[Dobriban et~al.(2023)Dobriban, Hassani, Hong, and Robey]{dobriban2023provable}
Edgar Dobriban, Hamed Hassani, David Hong, and Alexander Robey.
\newblock Provable tradeoffs in adversarially robust classification.
\newblock \emph{IEEE Transactions on Information Theory}, 69\penalty0 (12):\penalty0 7793--7822, 2023.

\bibitem[Huber and Ronchetti(2011)]{huber2011robust}
Peter~J Huber and Elvezio~M Ronchetti.
\newblock \emph{Robust statistics}.
\newblock John Wiley \& Sons, 2011.

\bibitem[Maronna et~al.(2019)Maronna, Martin, Yohai, and Salibi{\'a}n-Barrera]{maronna2019robust}
Ricardo~A Maronna, R~Douglas Martin, Victor~J Yohai, and Mat{\'\i}as Salibi{\'a}n-Barrera.
\newblock \emph{Robust statistics: theory and methods (with R)}.
\newblock John Wiley \& Sons, 2019.

\bibitem[Diakonikolas et~al.(2019)Diakonikolas, Kamath, Kane, Li, Moitra, and Stewart]{diakonikolas2019robust}
Ilias Diakonikolas, Gautam Kamath, Daniel Kane, Jerry Li, Ankur Moitra, and Alistair Stewart.
\newblock Robust estimators in high-dimensions without the computational intractability.
\newblock \emph{SIAM Journal on Computing}, 48\penalty0 (2):\penalty0 742--864, 2019.

\bibitem[Diakonikolas et~al.(2017)Diakonikolas, Kamath, Kane, Li, Moitra, and Stewart]{diakonikolas2017being}
Ilias Diakonikolas, Gautam Kamath, Daniel~M Kane, Jerry Li, Ankur Moitra, and Alistair Stewart.
\newblock Being robust (in high dimensions) can be practical.
\newblock In \emph{International Conference on Machine Learning}, pages 999--1008. PMLR, 2017.

\bibitem[Cheng et~al.(2019)Cheng, Diakonikolas, Ge, and Woodruff]{cheng2019faster}
Yu~Cheng, Ilias Diakonikolas, Rong Ge, and David~P Woodruff.
\newblock Faster algorithms for high-dimensional robust covariance estimation.
\newblock In \emph{Conference on Learning Theory}, pages 727--757. PMLR, 2019.

\bibitem[Ben-Hamou and Guyader(2023)]{ben2023robust}
Anna Ben-Hamou and Arnaud Guyader.
\newblock Robust non-parametric regression via median-of-means.
\newblock \url{https://hal.science/hal-03957385v1/document}, 2023.
\newblock HAL preprint, hal-03957385.

\bibitem[Dhar et~al.(2022)Dhar, Jha, and Rakshit]{dhar2022trimmed}
Subhra Dhar, Prashant Jha, and Prabrisha Rakshit.
\newblock The trimmed mean in non-parametric regression function estimation.
\newblock \emph{Theory of Probability and Mathematical Statistics}, 107:\penalty0 133--158, 2022.

\bibitem[Zhao and Wan(2024)]{zhao2024robust}
Puning Zhao and Zhiguo Wan.
\newblock Robust nonparametric regression under poisoning attack.
\newblock In \emph{Proceedings of the AAAI Conference on Artificial Intelligence}, volume~38, pages 17007--17015, 2024.

\bibitem[Dohmatob(2024)]{dohmatob2024consistent}
Elvis Dohmatob.
\newblock Consistent adversarially robust linear classification: non-parametric setting.
\newblock In \emph{Forty-first International Conference on Machine Learning}, 2024.

\bibitem[Wahba(1975)]{wahba1975smoothing}
Grace Wahba.
\newblock Smoothing noisy data with spline functions.
\newblock \emph{Numerische mathematik}, 24\penalty0 (5):\penalty0 383--393, 1975.

\bibitem[Wahba(1990)]{wahba1990spline}
Grace Wahba.
\newblock \emph{Spline models for observational data}.
\newblock SIAM, 1990.

\bibitem[Eilers and Marx(1996)]{eilers1996flexible}
Paul~HC Eilers and Brian~D Marx.
\newblock Flexible smoothing with b-splines and penalties.
\newblock \emph{Statistical science}, 11\penalty0 (2):\penalty0 89--121, 1996.

\bibitem[Liu et~al.(2024)Liu, Wang, Vaidya, Ruehle, Halverson, Solja{\v{c}}i{\'c}, Hou, and Tegmark]{liu2024kan}
Ziming Liu, Yixuan Wang, Sachin Vaidya, Fabian Ruehle, James Halverson, Marin Solja{\v{c}}i{\'c}, Thomas~Y Hou, and Max Tegmark.
\newblock Kan: Kolmogorov-arnold networks.
\newblock \emph{arXiv preprint arXiv:2404.19756}, 2024.

\bibitem[Zhao and Zhang(2022)]{zhao2022thin}
Jian Zhao and Hui Zhang.
\newblock Thin-plate spline motion model for image animation.
\newblock In \emph{Proceedings of the IEEE/CVF Conference on Computer Vision and Pattern Recognition}, pages 3657--3666, 2022.

\bibitem[Hua et~al.(2023)Hua, Berto, Poli, Massaroli, and Park]{hua2023learning}
Chuanbo Hua, Federico Berto, Michael Poli, Stefano Massaroli, and Jinkyoo Park.
\newblock Learning efficient surrogate dynamic models with graph spline networks.
\newblock \emph{Advances in Neural Information Processing Systems}, 36:\penalty0 52523--52547, 2023.

\bibitem[He et~al.(2024)He, Yang, and Kang]{he2024adaptive}
Junhui He, Ying Yang, and Jian Kang.
\newblock Adaptive bayesian multivariate spline knot inference with prior specifications on model complexity.
\newblock \emph{arXiv preprint arXiv:2405.13353}, 2024.

\bibitem[Nadaraya(1964)]{nadaraya1964estimating}
Elizbar~A Nadaraya.
\newblock On estimating regression.
\newblock \emph{Theory of Probability \& Its Applications}, 9\penalty0 (1):\penalty0 141--142, 1964.

\bibitem[Abramovich and Grinshtein(1999)]{abramovich1999derivation}
Felix Abramovich and Vadim Grinshtein.
\newblock Derivation of equivalent kernel for general spline smoothing: a systematic approach.
\newblock \emph{Bernoulli}, 5\penalty0 (1):\penalty0 109--123, 1999.

\bibitem[Utreras(1988)]{utreras1988convergence}
Florencio~I Utreras.
\newblock Convergence rates for multivariate smoothing spline functions.
\newblock \emph{Journal of approximation theory}, 52\penalty0 (1):\penalty0 1--27, 1988.

\bibitem[Ragozin(1983)]{ragozin1983error}
David~L Ragozin.
\newblock Error bounds for derivative estimates based on spline smoothing of exact or noisy data.
\newblock \emph{Journal of approximation theory}, 37\penalty0 (4):\penalty0 335--355, 1983.

\bibitem[Silverman(1984)]{silverman1984spline}
Bernard~W Silverman.
\newblock Spline smoothing: the equivalent variable kernel method.
\newblock \emph{The annals of Statistics}, pages 898--916, 1984.

\bibitem[Messer(1991)]{messer1991comparison}
K~Messer.
\newblock A comparison of a spline estimate to its equivalent kernel estimate.
\newblock \emph{The Annals of Statistics}, pages 817--829, 1991.

\bibitem[Messer and Goldstein(1993)]{messer1993new}
Karen Messer and Larry Goldstein.
\newblock A new class of kernels for nonparametric curve estimation.
\newblock \emph{The Annals of Statistics}, pages 179--195, 1993.

\bibitem[Nychka(1995)]{nychka1995splines}
Douglas Nychka.
\newblock Splines as local smoothers.
\newblock \emph{The Annals of Statistics}, pages 1175--1197, 1995.

\bibitem[Leoni(2024)]{leoni2024first}
Giovanni Leoni.
\newblock \emph{A first course in Sobolev spaces}, volume 181.
\newblock American Mathematical Society, 2024.

\bibitem[Wainwright(2019)]{wainwright2019high}
Martin~J Wainwright.
\newblock \emph{High-dimensional statistics: A non-asymptotic viewpoint}, volume~48.
\newblock Cambridge university press, 2019.

\bibitem[Virtanen et~al.(2020)Virtanen, Gommers, Oliphant, Haberland, Reddy, Cournapeau, Burovski, Peterson, Weckesser, Bright, et~al.]{virtanen2020scipy}
Pauli Virtanen, Ralf Gommers, Travis~E Oliphant, Matt Haberland, Tyler Reddy, David Cournapeau, Evgeni Burovski, Pearu Peterson, Warren Weckesser, Jonathan Bright, et~al.
\newblock Scipy 1.0: fundamental algorithms for scientific computing in python.
\newblock \emph{Nature methods}, 17\penalty0 (3):\penalty0 261--272, 2020.

\bibitem[Nemirovskij and Yudin(1983)]{nemirovskij1983problem}
Arkadij~Semenovich Nemirovskij and David~Borisovich Yudin.
\newblock \emph{Problem Complexity and Method Efficiency in Optimization}.
\newblock Wiley-Interscience, 1983.

\bibitem[Welsh(1987)]{welsh1987trimmed}
AH~Welsh.
\newblock The trimmed mean in the linear model.
\newblock \emph{The Annals of Statistics}, 15\penalty0 (1):\penalty0 20--36, 1987.

\bibitem[Humbert et~al.(2022)Humbert, Le~Bars, and Minvielle]{humbert2022robust}
Pierre Humbert, Batiste Le~Bars, and Ludovic Minvielle.
\newblock Robust kernel density estimation with median-of-means principle.
\newblock In \emph{International Conference on Machine Learning}, pages 9444--9465. PMLR, 2022.

\bibitem[Watson(1964)]{watson1964smooth}
Geoffrey~S Watson.
\newblock Smooth regression analysis.
\newblock \emph{Sankhy{\=a}: The Indian Journal of Statistics, Series A}, pages 359--372, 1964.

\bibitem[Wang et~al.(2018)Wang, Jha, and Chaudhuri]{wang2018analyzing}
Yizhen Wang, Somesh Jha, and Kamalika Chaudhuri.
\newblock Analyzing the robustness of nearest neighbors to adversarial examples.
\newblock In \emph{International Conference on Machine Learning}, pages 5133--5142. PMLR, 2018.

\bibitem[Yang et~al.(2020)Yang, Rashtchian, Wang, and Chaudhuri]{yang2020robustness}
Yao-Yuan Yang, Cyrus Rashtchian, Yizhen Wang, and Kamalika Chaudhuri.
\newblock Robustness for non-parametric classification: A generic attack and defense.
\newblock In \emph{International Conference on Artificial Intelligence and Statistics}, pages 941--951. PMLR, 2020.

\bibitem[Serfling(2009)]{serfling2009approximation}
Robert~J Serfling.
\newblock \emph{Approximation theorems of mathematical statistics}.
\newblock John Wiley \& Sons, 2009.

\bibitem[Moradi et~al.(2024)Moradi, Tahmasebi, and Maddah-Ali]{moradi2024coded}
Parsa Moradi, Behrooz Tahmasebi, and Mohammad Maddah-Ali.
\newblock Coded computing for resilient distributed computing: A learning-theoretic framework.
\newblock \emph{Advances in Neural Information Processing Systems}, 37:\penalty0 111923--111964, 2024.

\bibitem[LeCam(1973)]{lecam1973convergence}
Lucien LeCam.
\newblock Convergence of estimates under dimensionality restrictions.
\newblock \emph{The Annals of Statistics}, pages 38--53, 1973.

\bibitem[Yu(1997)]{yu1997assouad}
Bin Yu.
\newblock Assouad, fano, and le cam.
\newblock In \emph{Festschrift for Lucien Le Cam: research papers in probability and statistics}, pages 423--435. Springer, 1997.

\end{thebibliography}

\newpage
\appendix
\section{Proof of Theorem \ref{thm:upper}} \label{Proof:thm:upper}
In this section, we prove Theorem~\ref{thm:upper}. Without loss of generality, we assume that $\Omega = [0,1]$\,\footnote{Note that any function $f \colon [a,b] \to \mathbb{R}$ can be transformed with scaling and shifting into a function $\tilde{f} \colon [0,1] \to \mathbb{R}$ without affecting its Sobolev regularity or the scaling of the associated metrics.}. Recall that the solution of~\eqref{Definition:SS} is unique and the explicit formula for \( \hat{f}^{\,a}_{\mathrm{SS}} \) is given by
\begin{align}\label{eq:kernel_representation_f_adv}
\hat{f}^{\,a}_{\mathrm{SS}}(x) = \frac{1}{n} \sum_{i=1}^{n} W_n(x, x_i)\, \widetilde{y}_i,
\end{align}
where \( W_n(x, x_i) \) denotes the smoothing spline weight function depending on \( \{x_i\}^n_{i=1} \), the sample size \( n \), and the smoothing parameter \( \lambda \).

To facilitate the analysis, we define a second scenario in which the adversarial strategy is to be \emph{honest}, that is, for any \( i \in \mathcal{A} \), the adversary does not deviate from the clean data generation process and behaves as if it were non-adversarial.  
This allows us to construct a one-to-one correspondence between the realizations of the adversarial and honest scenarios such that for each \( i \notin \mathcal{A} \), the observed responses \( y_i \) are identical across both settings, while for \( i \in \mathcal{A} \), the responses may differ: in Scenario 1 (adversarial), the adversary may introduce arbitrary deviations, whereas in Scenario 2 (honest), the responses follow the true underlying model.

In this second setting, we apply the same smoothing spline estimator to the uncorrupted data.  
The resulting estimator, which we denote by \( \hat{f}_{\mathrm{SS}} \), is given by
\begin{align}\label{eq:kernel_representation_f_honest}
\hat{f}_{\mathrm{SS}}(x) = \frac{1}{n} \sum_{i=1}^{n} W_n(x, x_i)\, y_i,
\end{align}
where \( y_i \) denotes the uncorrupted response corresponding to input \( x_i \), i.e., $y_i = \tilde{y}_i$, for $i \in [n]\backslash \mathcal{A}$, and otherwise, for $i \in \mathcal{A}$,  $y_i = f(x_i) + \epsilon_i$, for some i.i.d $\epsilon_i$. We define $\hat{\boldsymbol{\epsilon}} := (\epsilon_i)_{i\in[n]}$.

We now proceed to prove the bounds stated in Theorem~\ref{thm:upper}.  
We first establish the upper bound for \( R_2(f, \hat{f}^{\,a}_{\mathrm{SS}}) \) in~\eqref{eq:upper_bound_R2}, and subsequently turn to the bound for \( R_\infty(f, \hat{f}^{\,a}_{\mathrm{SS}})\) in~\eqref{eq:upper_bound_Rinfty}.

By the definition of \( R_2(f, \hat{f}^{\,a}_{\mathrm{SS}}) \), we have
\begin{align}
R_2(f, \hat{f}^{\,a}_{\mathrm{SS}})
= \mathbb{E}_{\boldsymbol{\varepsilon}}\left[ \sup_{\mathcal{S}} \left\| f - \hat{f}^{\,a}_{\mathrm{SS}} \right\|^2_{L_2(\Omega)} \right],
\end{align}
where \( \boldsymbol{\varepsilon} = (\varepsilon_i)_{i \in [n] \setminus \mathcal{A}} \). First, observe that
\[
\mathbb{E}_{\boldsymbol{\varepsilon}}\left[ \sup_{\mathcal{S}} \left\| f - \hat{f}^{\,a}_{\mathrm{SS}} \right\|^2_{L_2(\Omega)} \right]
= \mathbb{E}_{\hat{\boldsymbol{\varepsilon}}}\left[ \sup_{\mathcal{S}} \left\| f - \hat{f}^{\,a}_{\mathrm{SS}} \right\|^2_{L_2(\Omega)} \right],
\]
which follows from the fact that \( \left\| f - \hat{f}^{\,a}_{\mathrm{SS}} \right\| \) is independent of the noise terms \( (\varepsilon_i)_{i \in \mathcal{A}} \). To proceed, we add and subtract \( \hat{f}_{\mathrm{SS}}(x) \) inside the squared term:
\begin{align}
\left( f(x) - \hat{f}^{\,a}_{\mathrm{SS}}(x) \right)^2
= \left( f(x) - \hat{f}_{\mathrm{SS}}(x) + \hat{f}_{\mathrm{SS}}(x) - \hat{f}^{\,a}_{\mathrm{SS}}(x) \right)^2.
\end{align}

Using AM-GM inequality, we obtain
\begin{align}
\left( f(x) - \hat{f}^{\,a}_{\mathrm{SS}}(x) \right)^2
\leq 2 \left( f(x) - \hat{f}_{\mathrm{SS}}(x) \right)^2
+ 2 \left( \hat{f}_{\mathrm{SS}}(x) - \hat{f}^{\,a}_{\mathrm{SS}}(x) \right)^2.
\end{align}

Substituting this bound into the definition of \( R_2(f, \hat{f}^{\,a}_{\mathrm{SS}}) \), we get
\begin{align}\label{adding_the_honest_scenario}
R_2(f, \hat{f}^{\,a}_{\mathrm{SS}})
&\leq 2\, \mathbb{E}_{\hat{\boldsymbol{\epsilon}} }\left[ \sup_{\mathcal{S}} \left\| f - \hat{f}_{\mathrm{SS}} \right\|^2_{L_2(\Omega)} \right]
+ 2\, \mathbb{E}_{\hat{\boldsymbol{\epsilon}}}\left[ \sup_{\mathcal{S}} \left\| \hat{f}_{\mathrm{SS}} - \hat{f}^{\,a}_{\mathrm{SS}} \right\|^2_{L_2(\Omega)} \right] .
\end{align}

To prove the upper bound in~\eqref{eq:upper_bound_R2}, it suffices to find appropriate bounds for the two terms appearing in~\eqref{adding_the_honest_scenario}.  
We begin by analyzing the first term involving the honest estimator \( \hat{f}_{\mathrm{SS}} \):
\[
\mathbb{E}_{\hat{\boldsymbol{\epsilon}} }\left[ \sup_{\mathcal{S}} \left\| f - \hat{f}_{\mathrm{SS}} \right\|^2_{L_2(\Omega)} \right].
\]
To do so, we use the following theorem, which is a direct consequence of \cite[Therorem~1.1]{utreras1988convergence}, specialized to the second-order Sobolev space setting:

\begin{lemma}\label{claim:utreras_result}
Let \( I = [a,b] \subset \mathbb{R} \) be a bounded interval, and let the design points \( \{x_i\}_{i=1}^n \subset I \) satisfy the quasi-uniformity condition
\begin{align}
\frac{\Delta_{\max}}{\Delta_{\min}} \leq k,
\end{align}
for some constant \( k > 0 \), where
\begin{align}
\Delta_{\max} := \sup_{x \in I} \min_{i=1,\dots,n} |x - x_i|, \quad
\Delta_{\min} := \min_{i \neq j} |x_i - x_j|.
\end{align}

Then, for any \( j = 0,1,2 \), there exist constants \( \lambda_0 > 0 \), \( P_0 > 0 \), and \( Q_0 > 0 \), such that for all \(n^{-4} \leq \lambda \leq \lambda_0 \), we have
\begin{align}\label{eq:utreras_claim}
\mathbb{E}_{\hat{\boldsymbol{\epsilon}} }\left[ \left\| f^{(j)} - \hat{f}_{\mathrm{SS}}^{(j)} \right\|^2_{L_2(I)} \right]
\leq P_0\, \lambda^{\frac{2-j}{2}} \int_I \left( f^{\prime\prime}(x) \right)^2 dx
+ \frac{Q_0\, \sigma^2}{n\, \lambda^{\frac{2j+1}{4}}} 
\end{align}
\end{lemma}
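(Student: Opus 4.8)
\emph{Proof plan.} The statement is obtained by specializing \cite[Theorem~1.1]{utreras1988convergence} to dimension $d=1$ and penalty order $m=2$, after a bias--variance split and a reconciliation of normalization conventions. The first step is to use that the smoothing spline is a linear smoother: by \eqref{eq:kernel_representation_f_honest} there is a linear operator $L_\lambda$ depending only on $\{x_i\}_{i=1}^n$, $n$, and $\lambda$ with $\hat f_{\mathrm{SS}} = L_\lambda[(y_i)_i]$, so that writing $\mathbf f := (f(x_i))_i$ and $\hat{\boldsymbol\epsilon} := (\epsilon_i)_i$ gives $\hat f_{\mathrm{SS}} = L_\lambda[\mathbf f] + L_\lambda[\hat{\boldsymbol\epsilon}]$. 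Differentiating $j$ times and subtracting $f^{(j)}$ splits the error into the deterministic bias $b_j := f^{(j)} - (L_\lambda[\mathbf f])^{(j)}$ and the mean-zero stochastic term $(L_\lambda[\hat{\boldsymbol\epsilon}])^{(j)}$; since $b_j$ is nonrandom and $\mathbb E[\epsilon_i]=0$, taking $\mathbb E_{\hat{\boldsymbol\epsilon}}$ of $\|f^{(j)}-\hat f_{\mathrm{SS}}^{(j)}\|_{L_2(I)}^2$ annihilates the cross term and leaves $\|b_j\|_{L_2(I)}^2 + \mathbb E_{\hat{\boldsymbol\epsilon}}\|(L_\lambda[\hat{\boldsymbol\epsilon}])^{(j)}\|_{L_2(I)}^2$. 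It therefore suffices to bound these two pieces.

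The second step is to invoke \cite[Theorem~1.1]{utreras1988convergence} for each piece. The term $L_\lambda[\mathbf f]$ is exactly the smoothing spline fitted to the noiseless data $(f(x_i))_i$, so Utreras' deterministic approximation estimate yields $\|b_j\|_{L_2(I)}^2 \lesssim \lambda^{(m-j)/m}\int_I (f^{(m)})^2$, which for $m=2$ is precisely $\lambda^{(2-j)/2}\int_I (f'')^2$. For the stochastic part, $\mathbb E_{\hat{\boldsymbol\epsilon}}\|(L_\lambda[\hat{\boldsymbol\epsilon}])^{(j)}\|_{L_2(I)}^2$ equals $\sigma^2$ times a trace-type quantity, which Utreras' estimate — equivalently, a direct computation from the eigenvalue decay of the smoothing operator for a quasi-uniform design — controls by $\sigma^2\, n^{-1}\lambda^{-(2j+d)/(2m)}$ up to a constant; with $d=1$, $m=2$ this is $\sigma^2/(n\lambda^{(2j+1)/4})$. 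The hypotheses of Utreras' theorem match exactly those carried here: the regularity requirement $2m>d$ holds since $4>1$; the quasi-uniformity $\Delta_{\max}/\Delta_{\min}\le k$ is assumed; the lower bound $\lambda\ge n^{-4}$ is the specialization of his requirement $\lambda\gtrsim\Delta_{\min}^{2m}$ together with $\Delta_{\min}\asymp n^{-1}$ (which follows from quasi-uniformity and $\Delta_{\max}\gtrsim n^{-1}$); and $\lambda\le\lambda_0$ is his ``$\lambda$ sufficiently small'' threshold. Summing the two bounds and renaming constants produces the inequality with explicit $P_0$, $Q_0$, $\lambda_0$.

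The step I expect to be the main obstacle is purely bookkeeping rather than new analysis: matching \eqref{Definition:SS}, which uses the normalized loss $\tfrac1n\sum_i(g(x_i)-y_i)^2$, against Utreras' unnormalized formulation $\sum_i(g(x_i)-y_i)^2$ requires the reparametrization $\lambda_{\mathrm{Utreras}} = n\lambda$, and one must check that this correctly maps his admissible $\lambda$-range to the stated $n^{-4}\le\lambda\le\lambda_0$ while leaving the exponents in the bound unchanged. A second, minor point is the passage from the discrete/empirical quadratic norm in which several of Utreras' intermediate estimates are naturally phrased to the Lebesgue $L_2(I)$ norm appearing in the statement; this uses the standing assumptions that $F_n\to F$ uniformly with density $p=F'$ bounded below by $p_{\min}>0$, together with quasi-uniformity, via the standard equivalence of discrete and continuous quadratic norms for splines of bounded mesh ratio.
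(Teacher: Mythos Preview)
Your proposal is correct and takes the same approach as the paper: the paper does not give an independent proof of this lemma but simply states it as ``a direct consequence of \cite[Theorem~1.1]{utreras1988convergence}, specialized to the second-order Sobolev space setting,'' which is exactly what you do. Your added bookkeeping (the bias--variance split via linearity of $L_\lambda$, the $d=1$, $m=2$ specialization of the exponents, and the $\lambda_{\mathrm{Utreras}}=n\lambda$ reparametrization) is more explicit than the paper's one-line citation but does not diverge from it in substance.
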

Here, \( \hat{f}_{\mathrm{SS}} \) is the smoothing spline estimator applied to uncorrupted data, and \( f^{(j)} \) denotes the \( j \)-th derivative of \( f \).  To bound the first term in~\eqref{adding_the_honest_scenario}, we invoke Lemma~\ref{claim:utreras_result} with \( j = 0 \), corresponding to the \( L_2(\Omega) \) error between the regression function \( f \) and the honest smoothing spline estimator \( \hat{f}_{\mathrm{SS}} \). This yields:
\begin{align}\label{eq:utreras_L2_bound}
\mathbb{E}_{\hat{\boldsymbol{\epsilon}}} \left[ \left\| f - \hat{f}_{\mathrm{SS}} \right\|^2_{L_2(\Omega)} \right]
\leq P_0\, \lambda \int_\Omega \left( f^{\prime\prime}(x) \right)^2\,dx
+ \frac{Q_0\, \sigma^2}{n\, \lambda^{1/4}},
\end{align}
where \( \lambda \) is the regularization parameter, and \( P_0, Q_0 > 0 \) are constants from Lemma~\ref{claim:utreras_result}.

To complete the proof of~\eqref{eq:upper_bound_R2}, we now seek to find an upper bound for the second term in~\eqref{adding_the_honest_scenario}, which captures the deviation between the adversarial and honest estimators:
\[
\mathbb{E}_{\hat{\boldsymbol{\epsilon}}}\left[ \sup_{\mathcal{S}} \left\| \hat{f}_{\mathrm{SS}} - \hat{f}^{\,a}_{\mathrm{SS}} \right\|^2_{L_2(\Omega)} \right].
\]
Note that from the kernel representations~\eqref{eq:kernel_representation_f_adv} and \eqref{eq:kernel_representation_f_honest}, we have
\begin{align}
\hat{f}_{\mathrm{SS}}(x) - \hat{f}^{\,a}_{\mathrm{SS}}(x) = \frac{1}{n} \sum_{i=1}^{n} W_n(x, x_i)\, (y_i - \widetilde{y}_i).
\end{align}
Thus, for each \( x \in \Omega\),
\begin{align}
\left| \hat{f}_{\mathrm{SS}}(x) - \hat{f}^{\,a}_{\mathrm{SS}}(x) \right| = \left| \frac{1}{n} \sum_{i=1}^{n} W_n(x, x_i)\, (y_i - \widetilde{y}_i) \right|.
\end{align}
Note that for each \(i\):
\begin{itemize}
\item If \( i \notin \mathcal{A} \), there is no corruption, and \( y_i = \widetilde{y}_i \).
\item If \( i \in \mathcal{A} \), the adversary may modify \( y_i \), and since \( f(x_i), \widetilde{y}_i \in [-M, M] \), we have
\[
\left| y_i - \widetilde{y}_i \right| = \left| f(x_i) - \widetilde{y}_i + \epsilon_i \right| \leq \left| f(x_i) - \widetilde{y}_i \right| + \left| \epsilon_i \right| \leq 2M + \left|\epsilon_i\right|.
\]
\end{itemize}
Thus, the sum above reduces to
\[
\frac{1}{n} \sum_{i \in \mathcal{A}} W_n(x, x_i)\, (y_i - \widetilde{y}_i),
\]
and we can bound
\begin{align}\label{difference_between_two_scenarios_pointwise}
\left| \hat{f}_{\mathrm{SS}}(x) - \hat{f}^{\,a}_{\mathrm{SS}}(x) \right|
&\leq \frac{1}{n} \sum_{j \in \mathcal{A}} \left| W_n(x, x_j) \right| \left(2M + \left|\epsilon_i\right| \right) \leq \frac{1}{n}
\sup_{x\in \Omega, j\in[n]} \left| W_n(x, x_j) \right| \cdot \sum_{j \in \mathcal{A}} \left(2M + \epsilon_i\right).
\end{align}
This implies that
\begin{align}
\left\| \hat{f}_{\mathrm{SS}} - \hat{f}^{\,a}_{\mathrm{SS}} \right\|^2_{L_2(\Omega)}
&\leq \left(\frac{\sup_{x\in \Omega, j\in[n]}\left| W_n(x, x_j) \right|}{n}\right)^2  \cdot \left(\sum_{i \in \mathcal{A}}  
  \left(2M + \left|\epsilon_i\right|\right)\right)^2
  \nonumber \\
  &\lec{(a)}{\leq} \left(\frac{\sup_{x\in \Omega, j\in[n]}\left| W_n(x, x_j) \right|}{n}\right)^2 \cdot \left(\sum_{i\in \mathcal{A}}1^2\right) \cdot \left(\sum_{i \in \mathcal{A}}  
  \left(2M + \left|\epsilon_i\right|\right)^2\right)
  \nonumber \\
  &\lec{(b)}{\leq} \left(\frac{\sup_{x\in \Omega, j\in[n]}\left| W_n(x, x_j) \right|}{n}\right)^2 \cdot q\cdot \sum_{i \in \mathcal{A}}  
  \left(8M^2 + 2\left|\epsilon_i\right|^2\right)
  \nonumber \\
  &\lec{}{=} \left(\frac{\sup_{x\in \Omega, j\in[n]}\left| W_n(x, x_j) \right|}{n}\right)^2 \cdot q\cdot \left(8M^2q + 2 \sum_{i \in \mathcal{A}}  
\epsilon_i^2\right),
\end{align}
where (a) and (b) follow from the Cauchy–Schwarz and AM–GM inequalities, respectively.

Taking expectations and supremum over \( \mathcal{S} \) yields
\begin{align}\label{first_term_r_2_bound}
\mathbb{E}_{\hat{\boldsymbol{\epsilon}} }\left[ \sup_{\mathcal{S}} \left\| \hat{f}_{\mathrm{SS}} - \hat{f}^{\,a}_{\mathrm{SS}} \right\|^2_{L_2(\Omega)} \right]
\leq  \frac{q^2(8M^2 + 2\sigma^2)}{n^2} \sup_{x\in \Omega, j\in[n]} \left| W_n(x, x_j) \right|^2 .
\end{align}
Now, to complete the proof of~\eqref{eq:upper_bound_R2}, it remains to find an upper bound for the kernel supremum term
\[
\sup_{x, j \in [n]} \left| W_n(x, x_j) \right|.
\]
Unfortunately, \( W_n(\cdot, \cdot) \) does not admit an analytically tractable form~\cite{silverman1984spline, messer1991comparison} for directly bounding its supremum in \eqref{kernel_representation_intuition}. However, a substantial body of research~\cite{silverman1984spline, messer1991comparison, messer1993new, nychka1995splines} has focused on approximating \( W_n(\cdot, \cdot) \) with analytically tractable functions, known as \emph{equivalent kernels}, denoted by \( \widehat{W}_n(x, s) \).
 We leverage such approximations in our analysis to derive an upper bound.

Recall that we define the empirical distribution function \( F_n \)  as
\begin{align}
F_n(x) = \frac{1}{n} \sum_{i=1}^n \mathbf{1} \{ x_i \leq x \}.
\end{align}
We assume that the empirical distribution function \( F_n \) converges  to a cumulative distribution function \( F \), i.e.,
$
\alpha(n) := \sup_{x \in \Omega} \left| F_n(x) - F(x) \right|
$
satisfies $\alpha(n) \longrightarrow 0 \, \text{as} \, n \to \infty$.
Moreover, we assume that \( F(x) \) is differentiable on \( \Omega \) with density \( p(x) = F'(x) \), and that there exists a constant \( p_{\min} > 0 \) such that
\begin{align}
\inf_{x \in \Omega} p(x) \geq p_{\min}.
\end{align}
To proceed, according to~\cite{abramovich1999derivation}, we define the equivalent kernel \( \widehat{W}_n(x,s) \)  as
\begin{align}\label{def:equivalent_kernel}
\widehat{W}_n(x,s) = \frac{\lambda^{-1/4}}{2} \left( p(s) p(x) \right)^{-3/8} e^{-\lambda^{-1/4} \varphi_0(x,s)} \sin\left( \lambda^{-1/4} \varphi_0(x,s) + \frac{\pi}{4} \right),
\end{align}
where the phase function \( \varphi_0(x,s) \) is given by
\begin{align}
\varphi_0(x,s) = 2^{-1/2} \int_{\min(x,s)}^{\max(x,s)} p(t)^{1/4} \, dt.
\end{align}

Based on ~\cite[Theorem 1]{abramovich1999derivation}, for sufficiently large \(n\), we have
\begin{align}\label{eq:convergence_of_kernels}
\left| \widehat{W}_n(x,s) - W_n(x,s) \right| \leq C \left( \lambda^{-1/2} \alpha(n) + 1 \right),
\end{align}
where \( C > 0 \) is a constant independent of \( n \),  
and the bound holds uniformly over all \( x \in [0,1] \) and $s \in [\tau_1, \tau_2]$, where $0 < \tau_1 < \tau_2 < 1$.

Now note that
\begin{align}
    \sup_{x\in \Omega, j\in[n]} \left| W_n(x, x_j) \right| 
    &= \sup_{x\in \Omega, j\in[n]} \left| \widehat{W}_n(x, x_j) + \left( W_n(x, x_j) - \widehat{W}_n(x, x_j) \right) \right| \\
    &\leq \sup_{x\in \Omega, j\in[n]} \left| \widehat{W}_n(x, x_j) \right| + \sup_{x\in \Omega, j\in[n]} \left| W_n(x, x_j) - \widehat{W}_n(x, x_j) \right|.
\end{align}

Using the uniform approximation property established in~\eqref{eq:convergence_of_kernels}, we can bound the second term:
\begin{align}
\sup_{x\in \Omega, j\in[n]} \left| W_n(x, x_j) - \widehat{W}_n(x, x_j) \right| 
\leq C \left( \lambda^{-1/2} \alpha(n) + 1 \right).
\end{align}

Thus,
\begin{align}\label{Bound_for_main_kernel}
    \sup_{x\in \Omega, j\in[n]} \left| W_n(x, x_j) \right|
    &\leq \sup_{x\in \Omega, j\in[n]} \left| \widehat{W}_n(x, x_j) \right| + C \left( \lambda^{-1/2} \alpha(n) + 1 \right)\nonumber \\
    &\overset{(a)}{\leq} \frac{\lambda^{-1/4}}{2} \left( p_{\min} \right)^{-3/4} + C \left( \lambda^{-1/2} \alpha(n) + 1 \right) 
\end{align}
where (a) follows from the definition of $\widehat{W}_n(x, x_j)$ in \eqref{def:equivalent_kernel}, and the fact that $\inf_{x \in \Omega} p(x) \geq p_{\min}$.
Combining the decomposition in~\eqref{adding_the_honest_scenario}, the bound on the honest estimator error from~\eqref{eq:utreras_L2_bound}, and the adversarial deviation bounds from~\eqref{first_term_r_2_bound} and~\eqref{Bound_for_main_kernel}, we obtain the final upper bound for \( R_2(f, \hat{f}^{\,a}_{\mathrm{SS}}) \) stated in Theorem~\ref{thm:upper}:
\begin{align}
R_2(f, \hat{f}^{\,a}_{\mathrm{SS}})
&\leq 2P_0\, \lambda \int_\Omega \left( f^{\prime\prime}(x) \right)^2 dx + \frac{2Q_0 \sigma^2}{n \lambda^{1/4}} \nonumber \\
&\quad +\frac{2q^2(8M^2 + 2\sigma^2)}{n^2}  \left[ \frac{\lambda^{-1/4}}{2} (p_{\min})^{-3/4} + C \left( \lambda^{-1/2} \alpha(n) + 1 \right) \right]^2.
\end{align}
Therefore, in the regime where \( \lambda \to 0 \) as \( n \to \infty \) and \( \lambda > n^{-2} > n^{-4} \),  
there exist constants \( E_1, E_2, E_3 \) such that for sufficiently large \( n \),
\begin{align}
R_2(f, \hat{f}^{\,a}_{\mathrm{SS}})
&\leq E_1 \lambda \int_\Omega \left( f^{\prime\prime}(x) \right)^2\,dx 
+ \frac{E_2 \sigma^2}{n \lambda^{1/4}} 
+ \frac{E_3 q^2 (M^2+\sigma^2)}{n^2 \lambda^{1/2}} \left( 1 + \lambda^{-1/4} \alpha(n) + \lambda^{1/4} \right)^2.
\end{align}

Since \( \lambda^{1/4} \to 0 \) as \( n \to \infty \), the additive term \( \lambda^{1/4} \) becomes negligible compared to 1 for sufficiently large \( n \). Dropping this term and absorbing constants, we obtain
\begin{align}
R_2(f, \hat{f}^{\,a}_{\mathrm{SS}})
&\lesssim \lambda \int_\Omega \left( f^{\prime\prime}(x) \right)^2\,dx 
+ \frac{ \sigma^2}{n \lambda^{1/4}} 
+ \frac{ q^2 (M^2+\sigma^2)}{n^2 \lambda^{1/2}} \left( 1 + \lambda^{-1/4} \alpha(n) \right)^2.
\end{align}

For a continuous cumulative distribution function \( F \), Serfling~\cite{serfling2009approximation} shows that \( \alpha(n) = n^{-1/2} \log\log n \) almost surely. Since \( \lambda > n^{-2} \), it follows that \( \lambda^{-1/4}\alpha(n) \to 0 \) as \( n \to \infty \). Therefore, for sufficiently large \( n \), we have \( 1 + \lambda^{-1/4} \alpha(n) < 2 \). As a result, we obtain
\begin{align}
    R_2(f, \hat{f}^{\,a}_{\mathrm{SS}})
&\lesssim \lambda \int_\Omega \left( f^{\prime\prime}(x) \right)^2\,dx 
+ \frac{ \sigma^2}{n \lambda^{1/4}} 
+ \frac{ q^2 (M^2+\sigma^2)}{n^2 \lambda^{1/2}}.
\end{align}

This concludes the proof of the upper bound on \( R_2(f, \hat{f}^{\,a}_{\mathrm{SS}}) \) in Theorem~\ref{thm:upper}.

To complete the proof of Theorem~\ref{thm:upper}, it remains to prove \eqref{eq:upper_bound_Rinfty}.  
To do so, we adopt a similar strategy as in the \( L_2 \) case, but adapted to the squared supremum norm.  
By the inequality \( (a + b)^2 \leq 2a^2 + 2b^2 \), we have
\begin{align}
\left\| f - \hat{f}^{\,a}_{\mathrm{SS}} \right\|_{L_\infty(\Omega)}^2 
\leq 2 \left\| f - \hat{f}_{\mathrm{SS}} \right\|_{L_\infty(\Omega)}^2 
+ 2 \left\| \hat{f}_{\mathrm{SS}} - \hat{f}^{\,a}_{\mathrm{SS}} \right\|_{L_\infty(\Omega)}^2.
\end{align}
Taking expectation and supremum over \( \mathcal{S} \), we substitute into the definition of \( R_\infty \) and obtain
\begin{align}\label{eq:adding_the_honest_infty}
R_\infty(f, \hat{f}^{\,a}_{\mathrm{SS}})
&= \mathbb{E}_{\hat{\boldsymbol{\epsilon}}}\left[ \sup_{\mathcal{S}} \left\| f - \hat{f}^{\,a}_{\mathrm{SS}} \right\|^2_{L_\infty(\Omega)} \right] \nonumber \\
&\leq 2\, \mathbb{E}_{\hat{\boldsymbol{\epsilon}}}\left[ \sup_{\mathcal{S}} \left\| f - \hat{f}_{\mathrm{SS}} \right\|^2_{L_\infty(\Omega)} \right]
+ 2\, \mathbb{E}_{\hat{\boldsymbol{\epsilon}}}\left[ \sup_{\mathcal{S}} \left\| \hat{f}_{\mathrm{SS}} - \hat{f}^{\,a}_{\mathrm{SS}} \right\|^2_{L_\infty(\Omega)} \right].
\end{align}

From the pointwise bound established in~\eqref{difference_between_two_scenarios_pointwise}, we have
\begin{align}
\left| \hat{f}_{\mathrm{SS}}(x) - \hat{f}^{\,a}_{\mathrm{SS}}(x) \right|
\leq \frac{2q(M + \max_i\left|\epsilon_i\right|)}{n} \sup_{x\in \Omega, j\in[n]} \left| W_n(x, x_j) \right|.
\end{align}
Applying the kernel estimate from~\eqref{Bound_for_main_kernel}, we conclude that
\begin{align}
\left\| \hat{f}_{\mathrm{SS}} - \hat{f}^{\,a}_{\mathrm{SS}} \right\|_{L_\infty(\Omega)} 
\leq \frac{2q(M + \max_i\left|\epsilon_i\right|)}{n} \left( \frac{\lambda^{-1/4}}{2} (p_{\min})^{-3/4} + C \left( \lambda^{-1/2} \alpha(n) + 1 \right) \right).
\end{align}
Squaring both sides and taking expectation and supremum over \( \mathcal{S} \), we obtain
\begin{align}\label{eq:bound_second_term_infty}
\mathbb{E}_{\hat{\boldsymbol{\epsilon}}} \left[ \sup_{\mathcal{S}} \left\| \hat{f}_{\mathrm{SS}} - \hat{f}^{\,a}_{\mathrm{SS}} \right\|^2_{L_\infty(\Omega)} \right]
\lesssim  \frac{q^2(M^2 + \sigma^2)}{n^2\lambda^{1/2}} \left( 1 + \lambda^{-1/4} \alpha(n) + \lambda^{1/4} \right)^2.
\end{align}

To complete the proof of~\eqref{eq:upper_bound_Rinfty}, it remains to find an upper bound for the first term in~\eqref{eq:adding_the_honest_infty}, namely
\[
\mathbb{E}_{\hat{\boldsymbol{\epsilon}}}\left[ \sup_{\mathcal{S}} \left\| f - \hat{f}_{\mathrm{SS}} \right\|^2_{L_\infty(\Omega)} \right].
\]

To do so, Since $f-\hat{f}_{SS} \in \mathcal{W}^2(\Omega)$, we can leverage Sobolev norms inequalities \cite{leoni2024first} and use the same arguments as in \cite[Lemma~5]{moradi2024coded} and obtain:
\begin{align}\label{norm_infty_in_terms_of_other_norms}
\left\| f - \hat{f}_{\mathrm{SS}} \right\|_{L_\infty(\Omega)}^2
\leq 2 \left\| f - \hat{f}_{\mathrm{SS}} \right\|_{L_2(\Omega)} \cdot \left\| f' - \hat{f}_{\mathrm{SS}}' \right\|_{L_2(\Omega)}
\end{align}
Taking expectations on both sides of~\eqref{norm_infty_in_terms_of_other_norms} and applying the Cauchy–Schwarz inequality, we obtain:
\begin{align}\label{Cauchy–Schwarz}
\mathbb{E}_{\hat{\boldsymbol{\epsilon}} }\left[ \left\| f - \hat{f}_{\mathrm{SS}} \right\|^2_{L_\infty(\Omega)} \right]
&\leq 2\, \mathbb{E}_{\hat{\boldsymbol{\epsilon}} }\left[ \left\| f - \hat{f}_{\mathrm{SS}} \right\|_{L_2(\Omega)} \cdot \left\| f' - \hat{f}_{\mathrm{SS}}' \right\|_{L_2(\Omega)} \right] \nonumber \\
&\leq 2 \left( \mathbb{E}_{\hat{\boldsymbol{\epsilon}} }\left[ \left\| f - \hat{f}_{\mathrm{SS}} \right\|^2_{L_2(\Omega)} \right] \right)^{1/2}
\cdot \left( \mathbb{E}_{\hat{\boldsymbol{\epsilon}} }\left[ \left\| f' - \hat{f}_{\mathrm{SS}}' \right\|^2_{L_2(\Omega)} \right] \right)^{1/2}.
\end{align}

Applying Lemma~\ref{claim:utreras_result} with \( j = 0 \) and \( j = 1 \), we can bound the right-hand side using:
\begin{align}
\mathbb{E}_{\hat{\boldsymbol{\epsilon}}}\left[ \left\| f - \hat{f}_{\mathrm{SS}} \right\|^2_{L_2(\Omega)} \right] &\leq P_0 \lambda \int_\Omega \left( f''(x) \right)^2 dx + \frac{Q_0 \sigma^2}{n \lambda^{1/4}}, \label{eq:utreras_L2_bound_0}
\\
\mathbb{E}_{\hat{\boldsymbol{\epsilon}} }\left[ \left\| f' - \hat{f}_{\mathrm{SS}}' \right\|^2_{L_2(\Omega)} \right] &\leq P_0 \lambda^{1/2} \int_\Omega \left( f''(x) \right)^2 dx + \frac{Q_0 \sigma^2}{n \lambda^{3/4}}. \label{eq:utreras_L2_bound_1}
\end{align}

Substituting the bounds from~\eqref{eq:utreras_L2_bound_0} and~\eqref{eq:utreras_L2_bound_1} into~\eqref{norm_infty_in_terms_of_other_norms}, we obtain
\begin{align}\label{eq:bound_Rinfty_first_term}
\mathbb{E}_{\hat{\boldsymbol{\epsilon}}}\left[ \left\| f - \hat{f}_{\mathrm{SS}} \right\|^2_{L_\infty(\Omega)} \right]
&\leq 2\left( P_0 \lambda \int_\Omega \left( f''(x) \right)^2\,dx + \frac{Q_0 \sigma^2}{n \lambda^{1/4}} \right)^{1/2} \nonumber \\
&\quad \times \left( P_0 \lambda^{1/2} \int_\Omega \left( f''(x) \right)^2\,dx + \frac{Q_0 \sigma^2}{n \lambda^{3/4}} \right)^{1/2}.
\end{align}

Combining the decomposition in~\eqref{eq:adding_the_honest_infty} with the bounds from~\eqref{eq:bound_Rinfty_first_term} and~\eqref{eq:bound_second_term_infty}, we obtain the following upper bound in the regime where \( \lambda \to 0 \) as \( n \to \infty \) and \( \lambda > n^{-2} \geq n^{-4} \):
\begin{align}\label{eq:final_bound_Rinfty}
R_\infty(f, \hat{f}^{\,a}_{\mathrm{SS}})
&\lesssim \left(\lambda \int_\Omega \left( f''(x) \right)^2\,dx + \frac{\sigma^2}{n \lambda^{1/4}} \right)^{1/2}
 \times \left(\lambda^{1/2} \int_\Omega \left( f''(x) \right)^2\,dx + \frac{ \sigma^2}{n \lambda^{3/4}} \right)^{1/2} \nonumber \\
&\quad +  \frac{q^2(M^2 + \sigma^2)}{n^2\lambda^{1/2}} \left( 1 + \lambda^{-1/4} \alpha(n) + \lambda^{1/4} \right)^2.
\end{align}
We now multiply and divide the first term by \( \lambda^{1/4} \), yielding:
\begin{align*}
R_\infty(f, \hat{f}^{\,a}_{\mathrm{SS}})
&\lesssim \lambda^{-1/4} \left( \lambda \int_\Omega \left( f^{\prime\prime}(x) \right)^2\,dx + \frac{\sigma^2}{n \lambda^{1/4}} \right)  +  \frac{q^2 (M^2 + \sigma^2)}{n^2\lambda^{1/2}} \left( 1 + \lambda^{-1/4} \alpha(n) + \lambda^{1/4} \right)^2.
\end{align*}
By arguments similar to those used in the bound for $R_2(f, \hat{f}^a_{\mathrm{SS}})$, we can neglect both $\lambda^{1/4}$ and $\lambda^{-1/4} \alpha(n)$ compared to $1$ for sufficiently large $n$. Thus, we obtain
\begin{align}
R_\infty(f, \hat{f}^{\,a}_{\mathrm{SS}})
&\lesssim \lambda^{-1/4} \left( \lambda \int_\Omega \left( f^{\prime\prime}(x) \right)^2\,dx + \frac{\sigma^2}{n \lambda^{1/4}} \right)
+ \frac{q^2 (M^2 + \sigma^2)}{n^2\lambda^{1/2}}.
\end{align}
This completes the proof of the upper bound on \( R_\infty(f, \hat{f}^{\,a}_{\mathrm{SS}}) \) in~\eqref{eq:upper_bound_Rinfty}, and thereby concludes the proof of Theorem~\ref{thm:upper}.

\section{Proof of Theorem \ref{thm:lower}}\label{Proof:thm:lower}
To prove Theorem \ref{thm:lower}, we first state and prove Lemma \ref{lemma:two_distributions}.

\begin{lemma}\label{lemma:two_distributions}
Let \( P_1 \) and \( P_2 \) denote two probability density functions of two  distributions with common variance \( \sigma^2 > 0 \). Then, there exists $\alpha \in [0,1]$,
and two probability density functions \( Q_1 \) and \( Q_2 \) such that
\begin{align}
(1-\alpha)P_1 + \alpha Q_1 = (1-\alpha)P_2 + \alpha Q_2,
\end{align}
where \( Q_1 \) and \( Q_2 \) are explicitly constructed from \( P_1 \) and \( P_2 \).
\end{lemma}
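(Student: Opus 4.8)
The plan is to realize the common mixture through the ``overlap plus re-routed mass'' decomposition underlying maximal couplings. First I would introduce the total variation distance
\[
\delta := \mathrm{TV}(P_1,P_2) = \tfrac12 \int_{\mathbb{R}} \lvert P_1(u) - P_2(u) \rvert \, du \in [0,1],
\]
and immediately dispose of the degenerate case $\delta = 0$: there $P_1 = P_2$ almost everywhere, so $\alpha = 0$ together with $Q_1 = Q_2 := P_1$ trivially satisfies the claim. Assume henceforth $\delta \in (0,1]$.

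Next I would define
\[
Q_1(u) := \frac{1}{\delta}\bigl(P_2(u) - P_1(u)\bigr)_+, \qquad Q_2(u) := \frac{1}{\delta}\bigl(P_1(u) - P_2(u)\bigr)_+,
\]
where $(t)_+ := \max(t,0)$. These are manifestly nonnegative, and they are valid probability densities because the positive and negative parts of $P_2 - P_1$ carry equal mass: since $\int (P_2 - P_1) = 0$, we have $\int (P_2 - P_1)_+ = \int (P_1 - P_2)_+$, and each of these equals $\delta$ by the definition of $\delta$. I would then set
\[
\alpha := \frac{\delta}{1+\delta} \in \bigl[0, \tfrac12\bigr] \subseteq [0,1].
\]
To verify the identity, note the pointwise relation $(P_2 - P_1)_+ - (P_1 - P_2)_+ = P_2 - P_1$, which gives $Q_1 - Q_2 = \tfrac{1}{\delta}(P_2 - P_1)$, hence
\[
\alpha(Q_1 - Q_2) = \frac{\delta}{1+\delta}\cdot\frac{1}{\delta}(P_2 - P_1) = \frac{1}{1+\delta}(P_2 - P_1) = (1-\alpha)(P_2 - P_1),
\]
and rearranging yields $(1-\alpha)P_1 + \alpha Q_1 = (1-\alpha)P_2 + \alpha Q_2$. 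Both sides are nonnegative and integrate to one, so this common object is a bona fide probability density, completing the construction.

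There is essentially no hard step here; the construction is the standard maximal-coupling decomposition. The only points that require care are the equal-mass property of the positive and negative parts of $P_2 - P_1$ (which is immediate from $P_1, P_2$ being densities) and keeping the degenerate case $\delta = 0$ separate so that division by $\delta$ is legitimate. The common-variance hypothesis plays no role in the lemma itself; it is retained because $Q_1$ and $Q_2$ will subsequently serve as the adversary's corruption distributions in the reduction to hypothesis testing, where the second-moment bookkeeping becomes relevant.
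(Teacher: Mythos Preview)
Your construction is correct and coincides with the paper's: the paper also sets $\alpha = \frac{\delta}{1+\delta}$ (written as $\alpha = \frac{\int_{\{P_2 \ge P_1\}}(P_2 - P_1)}{1 + \int_{\{P_2 \ge P_1\}}(P_2 - P_1)}$) and defines $Q_1, Q_2$ as the normalized positive and negative parts of $P_2 - P_1$, with normalizing factor $\frac{1-\alpha}{\alpha} = \frac{1}{\delta}$. Your version is slightly more polished in that you isolate the degenerate case $\delta = 0$ and verify the mixture identity via the pointwise relation $(P_2 - P_1)_+ - (P_1 - P_2)_+ = P_2 - P_1$, and your observation that the common-variance hypothesis is unused in the lemma itself is accurate.
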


\begin{proof}
Define $\alpha$ as:
\begin{align}\label{definition_alpha_normalisation}
\alpha = \frac{\int_{\{ u : P_2(u) \geq P_1(u) \}} \left( P_2(u) - P_1(u) \right) \, du}{1 + \int_{\{ u : P_2(u) \geq P_1(u) \}} \left( P_2(u) - P_1(u) \right) \, du} \leq 1.
\end{align}

Next, define \( Q_1 \) and \( Q_2 \) as:
\begin{align}
Q_1(u) &= \frac{1 - \alpha}{\alpha} \left( P_2(u) - P_1(u) \right) \mathbf{1}\{ P_2(u) \geq P_1(u) \}, \\
Q_2(u) &= \frac{1 - \alpha}{\alpha} \left( P_1(u) - P_2(u) \right) \mathbf{1}\{ P_1(u) > P_2(u) \},
\end{align}
where \( \mathbf{1}\{ \cdot \} \) denotes the indicator function.

By construction, both \( Q_1(u) \) and \( Q_2(u) \) are non-negative since the indicator functions restrict the support to regions where the corresponding differences are non-negative. We now show that \( Q_1 \) and \( Q_2 \) are valid probability density functions. Consider:
\begin{align}
\int Q_1(u) \, du 
&= \frac{1 - \alpha}{\alpha} \int \left( P_2(u) - P_1(u) \right) \mathbf{1}\{ P_2(u) \geq P_1(u) \} \, du \nonumber \\
&= \frac{1 - \alpha}{\alpha} \int_{\{ u : P_2(u) \geq P_1(u) \}} \left( P_2(u) - P_1(u) \right) \, du = 1.
\end{align}
By symmetry, the same argument shows that \( \int Q_2(u) \, du = 1 \) as well.

Hence, both \( Q_1 \) and \( Q_2 \) are valid densities. With this choice of \( \alpha \), the following identity holds:
\begin{align}
(1 - \alpha) P_1 + \alpha Q_1 = (1 - \alpha) P_2 + \alpha Q_2.
\end{align}
This completes the proof.
\end{proof}

We now prove Theorem~\ref{thm:lower}, building on Lemma~\ref{lemma:two_distributions}.  
We begin by establishing the lower bound for the metric \( R_2 \), as stated in~\eqref{eq:lower_R_2}; the proof for \( R_\infty \), given in~\eqref{eq:lower_R_infty}, follows by a similar argument. To do so, we reduce the minimax risk in \eqref{eq:lower_R_2} and \eqref{eq:lower_R_infty} to a hypothesis testing problem~\cite{wainwright2019high}. Specifically, we construct two functions \( f_1 \) and \( f_2 \) in \( \mathcal{W}^2(\Omega) \) with \( L_2 \) and \( L_\infty \) distance, bounded away from zero (see Figure~\ref{indistnguishible_fig}). However, given $n$ samples from either function, an adversary can corrupt up to $q$ of them, making it impossible for any estimator to reliably distinguish between \( f_1 \) and \( f_2 \).
Consequently, no estimation approach can identify which function generated the data, and the average hypothesis testing error remains $1/2$. Applying \cite[Proposition~5.1]{wainwright2019high} yields the lower bounds in Theorem~\ref{thm:lower}. The details of the proof is as follows.

Throughout the proof, we assume a fixed design given by \( x_i = i/n \) and \( \varepsilon_i \sim \mathcal{N}(0, \sigma^2) \) are i.i.d noise samples drawn from a normal distribution with zero mean and variance \( \sigma^2 \), for \( i \in [n]\).

Let \( r_q = \frac{q}{n} \) and define \( \varepsilon_q = r_q^2 \). We construct two functions, \( f_1 \) and \( f_2 \), as follows. Set
\[
f_1(x) = 0 \quad \text{for all } x \in [0,1].
\]

To define \( f_2 \), we construct a degree-5 polynomial \( g(x) \) on the interval \( [r_q - \varepsilon_q, r_q] \) that satisfies the following conditions:
\begin{align}
g(r_q - \varepsilon_q) &= \varepsilon_q, \\
g'(r_q - \varepsilon_q) &= -1, \\
g''(r_q - \varepsilon_q) &= 0, \\
g(r_q) &= 0, \\
g'(r_q) &= 0, \\
g''(r_q) &= 0.
\end{align}

These six conditions uniquely determine a polynomial of degree 5, since there are six coefficients to solve for. Hence, such a polynomial \( g \) exists and can be explicitly constructed.
Now, define \( f_2 \) on the interval \( [0,1] \) by
\[
f_2(x) =
\begin{cases}
r_q - x, & \text{if } x \in [0, r_q - \varepsilon_q], \\
g(x), & \text{if } x \in [r_q - \varepsilon_q, r_q], \\
0, & \text{if } x > r_q.
\end{cases}
\]

It is straightforward to verify that \( f_2 \in \mathcal{W}^2([0,1]) \), since both \( f_2 \) and its first and second derivatives have bounded norms over $\Omega$ (See Figure~\ref{indistnguishible_fig}).

Note that \( f_1 \) and \( f_2 \) are close but not identical; their differences are concentrated on the interval \( [0, r_q] \), and will be used to construct the lower bound.

For each sample \( x_i \), the adversary proceeds as follows:

\begin{itemize}
\item If \( x_i \geq r_q \), then \( f_1(x_i) = f_2(x_i) \), so no corruption is needed: both models produce identical distributions for \( \widetilde{y_i} \).
\item If \( x_i < r_q \), then \( f_1(x_i) \neq f_2(x_i) \), and the adversary applies Lemma~\ref{lemma:two_distributions} to the pair of normal distributions
\[
P_1^{(i)} := \mathcal{N}(f_1(x_i), \sigma^2), \qquad P_2^{(i)} := \mathcal{N}(f_2(x_i), \sigma^2),
\]
obtaining a scalar \( \alpha_i \in [0,1] \) and auxiliary distributions \( Q_1^{(i)} \) and \( Q_2^{(i)} \) such that
\[
(1-\alpha_i) P_1^{(i)} + \alpha_i Q_1^{(i)} = (1-\alpha_i) P_2^{(i)} + \alpha_i Q_2^{(i)}.
\]

For each such \( i \), the adversary acts:
\begin{itemize}
\item With probability \( 1-\alpha_i \), leave \( y_i \) uncorrupted (i.e., drawn from \(P_1^{(i)}\) if \(f=f_1\), or from \(P_2^{(i)}\) if \(f=f_2\)).
\item With probability \( \alpha_i \), the adversary replaces \( y_i \) by a draw from \( Q_1^{(i)} \) if the true function is \( f_1 \), and from \( Q_2^{(i)} \) if the true function is \( f_2 \).
\end{itemize}
\end{itemize}

For the above adversarial strategy, we have $|\mathcal{A}| \leq r_qn = q$. In addition, note that under model \( f_1 \), conditionally on \( x_i \), the corrupted response \( \widetilde{y}_i \) is distributed according to
$(1-\alpha_i) P_1^{(i)} + \alpha_i Q_1^{(i)}$,
and under model \( f_2 \), it is distributed according to
$(1-\alpha_i) P_2^{(i)} + \alpha_i Q_2^{(i)}$.
By construction of \( Q_1^{(i)} \) and \( Q_2^{(i)} \) in Lemma~\ref{lemma:two_distributions}, these two mixtures are identical for each \( i \).

Therefore, after adversarial corruption, the distribution of all observed data \(\{ \widetilde{y}_i\}_{i=1}^{n}\) is identical under \( f_1 \) and \( f_2 \). More precisely:
\begin{itemize}
\item For all \(i\) with \( x_i > r_q \), we have \(f_1(x_i) = f_2(x_i) \), and hence \(P_1^{(i)}=P_2^{(i)}\); no corruption is needed, and the distribution of \(\widetilde{y}_i\) is the same under both models.
\item For all \(i\) with \( x_i \leq r_q \), the adversary modifies the responses exactly so that the overall conditional distribution of \(\widetilde{y}_i\) is matched across the two models.
\end{itemize}

Note that the constructed functions \( f_1 \) and \( f_2 \) are not identical: by definition, their difference measured by the metrics introduced in~\eqref{eq:continuous_R2} and~\eqref{eq:continuous_Rinfty} is nonzero. However, the adversarial corruption strategy described above renders the corrupted data distribution identical under both \( f_1 \) and \( f_2 \). Consequently, no estimator can achieve better performance than random guessing between the two hypotheses. As a result, the minimax error under adversarial corruption remains bounded away from zero, establishing a nontrivial lower bound.

To prove~\eqref{eq:lower_R_2}, by starting from the definition of $R_2(f, \hat{f})$, we have
\begin{align}
R_2(f, \hat{f}) = \mathbb{E}_{\boldsymbol{\varepsilon}} \left[\sup_{\mathcal{S}}  \int_0^1 \left( f(x) - \hat{f}(x) \right)^2 dx \right],
\end{align}
where the expectation is over the noise \( \boldsymbol{\varepsilon} \), and the supremum is taken over all admissible adversarial strategies \( \mathcal{S} \). Since Theorem~\ref{thm:lower} considers the worst-case function \( f \), we obtain
\begin{align}\label{eq:lower1}
\inf_{\hat{f}} \sup_{f \in \mathcal{W}^2(\Omega), \mathcal{S},P_{\boldsymbol{\varepsilon}}} R_2(f, \hat{f}) 
\geq \inf_{\hat{f}} \sup_{f \in \{f_1, f_2\}} \mathbb{E}_{\boldsymbol{\varepsilon}} \left[ \int_0^1 \left( f(x) - \hat{f}(x) \right)^2 dx \right].
\end{align}

As established earlier, the adversary makes the corrupted data distribution identical under both \( f_1 \) and \( f_2 \). Formally, let $\mathbb{P}_{f_1}^{(\mathcal{A})}$ and $\mathbb{P}_{f_2}^{(\mathcal{A})}$ denote the distributions over the corrupted datasets when the ground truth is $f_1$ or $f_2$, respectively. Thus, we have:
\[
\mathbb{P}_{f_1}^{(\mathcal{A})} = \mathbb{P}_{f_2}^{(\mathcal{A})}.
\]
That is, the total variation distance satisfies:
\begin{align}\label{eq:tv}
\mathrm{TV}(\mathbb{P}_{f_1}^{(\mathcal{A})}, \mathbb{P}_{f_2}^{(\mathcal{A})}) = 0.
\end{align}
This guarantees that no estimator can distinguish between them better than random guessing. 
To formalize this, we use Le Cam's two-point method \cite{lecam1973convergence, yu1997assouad} (the hypothesis testing between two points), which states that for any estimator $\hat{f}$ and any pair $f_1, f_2$,
\[
\inf_{\hat{f}} \sup_{f \in \{f_1, f_2\}} \mathbb{E}_{\boldsymbol{\varepsilon}} \left[ \|\hat{f} - f\|_{L^2(\Omega)}^2 \right]
\ge \frac{\|f_1-f_2\|^2_{L^2(\Omega)}}{4} \cdot \left( 1 - \mathrm{TV}(\mathbb{P}_{f_1}^{(\mathcal{A})}, \mathbb{P}_{f_2}^{(\mathcal{A})}) \right).
\]
Using ~\eqref{eq:tv}, we obtain the following lower bound:
\[
\inf_{\hat{f}} \sup_{f \in \{f_1, f_2\}} \mathbb{E} \left[ \|\hat{f} - f\|_{L^2(\Omega)}^2 \right] \ge \frac{1}{4} \| f_1 - f_2 \|_{L^2(\Omega)}^2.
\]
Consequently, following \eqref{eq:lower1} we have
\begin{align}
\inf_{\hat{f}} \sup_{f \in \mathcal{W}^2(\Omega), \mathcal{S},P_{\boldsymbol{\varepsilon}}} R_2(f, \hat{f}) 
\geq \frac{1}{4} \int_0^1 \left( f_1(x) - f_2(x) \right)^2 dx.
\end{align}

Recall that \( f_1(x) = 0 \), and 
\[
f_2(x) = 
\begin{cases}
r_q - x, & x \in [0, r_q - \varepsilon_q], \\
g(x), & x \in [r_q - \varepsilon_q, r_q], \\
0, & x > r_q,
\end{cases}
\]
where \( g(x) \) is a degree-5 polynomial satisfying the smoothness and boundary conditions described earlier. Therefore,
\begin{align}
\int_0^1 \left( f_1(x) - f_2(x) \right)^2 dx 
= \int_0^{r_q} f_2(x)^2 dx 
&= \int_0^{r_q - \varepsilon_q} (r_q - x)^2 dx + \int_{r_q - \varepsilon_q}^{r_q} g(x)^2 dx \nonumber \\
&\geq\int_0^{r_q - \varepsilon_q} (r_q - x)^2 dx .
\end{align}

Note that since \( \varepsilon_q = r_q^2\), we have
\begin{align}
\int_0^{r_q - \varepsilon_q} (r_q - x)^2 dx 
= \int_{\varepsilon_q}^{r_q} u^2 du 
= \frac{r_q^3 - \varepsilon_q^3}{3} \gtrsim r_q^3 = \left( \frac{q}{n} \right)^3.
\end{align}
Therefore, we have
\begin{align}
\inf_{\hat{f}} \sup_{f \in \mathcal{W}^2(\Omega), \mathcal{S},P_{\boldsymbol{\varepsilon}}} R_2(f, \hat{f}) \gtrsim r_q^3 = \left( \frac{q}{n} \right)^3.
\end{align}

Moreover, even in the absence of adversarial corruption (i.e., \( q = 0 \)), it is well known from classical minimax theory in nonparametric regression~\cite{tsybakov2009introduction} that
\begin{align}
\inf_{\hat{f}} \sup_{f \in \mathcal{W}^2(\Omega)} \mathbb{E} \left[ \left\| f - \hat{f} \right\|_{L_2(\Omega)}^2 \right] \gtrsim n^{-4/5}.
\end{align}

Combining the two regimes, we obtain the following lower bound on the adversarial error:
\begin{align}
\inf_{\hat{f}} \sup_{f \in \mathcal{W}^2(\Omega),\, \mathcal{S},P_{\boldsymbol{\varepsilon}}} R_2(f, \hat{f}) \gtrsim \left( \frac{q}{n} \right)^3 + n^{-4/5}.
\end{align}

This completes the proof of \eqref{eq:lower_R_2}. To complete the proof of Theorem~\ref{thm:lower}, we now establish a lower bound for \( R_{\infty} \).  
Recall that
\begin{align}
R_{\infty}(f, \hat{f}) = \mathbb{E}_{\boldsymbol{\varepsilon}}\left[ \sup_{\mathcal{S}} \left\| f - \hat{f} \right\|^2_{L_{\infty}(\Omega)} \right],
\end{align}
where the expectation is taken over the noise \( \boldsymbol{\varepsilon} \), and the supremum is over all adversarial corruption strategies \( \mathcal{S} \). The norm \( \left\| \cdot \right\|_{L_{\infty}(\Omega)} \) denotes the supremum norm over the interval \([0,1]\). 

As in the case of \( R_2 \), the adversary can construct corrupted data distributions under \( f_1 \) and \( f_2 \) that are indistinguishable. Consequently, no estimator can distinguish between the two hypotheses better than random guessing. Applying Le Cam's two-point method~\cite{lecam1973convergence, yu1997assouad} to the \( L_\infty \) loss, we obtain:

\[
\inf_{\hat{f}} \sup_{f \in \{f_1, f_2\}} \mathbb{E}_{\boldsymbol{\varepsilon}} \left[ \|\hat{f} - f\|_{L^\infty(\Omega)}^2 \right]
\ge \frac{\|f_1-f_2\|^2_{L^\infty(\Omega)}}{4} \cdot \left( 1 - \mathrm{TV}(\mathbb{P}_{f_1}^{(\mathcal{A})}, \mathbb{P}_{f_2}^{(\mathcal{A})}) \right).
\]

Therefore, we have:
\begin{align}
\inf_{\hat{f}} \sup_{f \in \mathcal{W}^2(\Omega), \mathcal{S},P_{\boldsymbol{\varepsilon}}} R_{\infty}(f, \hat{f}) \geq \frac{\left\| f_1 - f_2 \right\|^2_{L_{\infty}(\Omega)}}{4}.
\end{align}
Since \( f_1(x) = 0 \), we have \( \left\| f_1 - f_2 \right\|_{L_{\infty}(\Omega)} = \left\| f_2 \right\|_{L_{\infty}(\Omega)} \geq f_2(0) \). From the definition of \( f_2 \), we have \( f_2(0) = r_q \). Therefore,
\begin{align}
\inf_{\hat{f}} \sup_{f \in \mathcal{W}^2(\Omega),\, \mathcal{S},P_{\boldsymbol{\varepsilon}}} R_{\infty}(f, \hat{f}) \gtrsim r_q^2 = \left( \frac{q}{n} \right)^2.
\end{align}

Moreover, in the absence of adversarial corruption (i.e., \( q = 0 \)), the standard minimax rate for estimation under the supremum norm is known to satisfy (see \cite{tsybakov2009introduction})
\begin{align}
\inf_{\hat{f}} \sup_{f \in \mathcal{W}^2(\Omega)} \mathbb{E} \left[ \left\| f - \hat{f} \right\|_{L_\infty(\Omega)}^2 \right] \gtrsim \left( \frac{\log n}{n} \right)^{3/4}.
\end{align}

Combining both contributions, we conclude that
\begin{align}
\inf_{\hat{f}} \sup_{f \in \mathcal{W}^2(\Omega),\, \mathcal{S},P_{\boldsymbol{\varepsilon}}} R_{\infty}(f, \hat{f}) \gtrsim \left( \frac{q}{n} \right)^2 + \left( \frac{\log n}{n} \right)^{3/4}.
\end{align}

This completes the proof of~\eqref{eq:lower_R_infty}, and thereby the proof of Theorem~\ref{thm:lower}.

\newpage

\section{Gaussian Setting Experiments}
\begin{figure}[h]
  \centering
  \begin{subfigure}[b]{0.49\textwidth}
    \includegraphics[width=1\linewidth]{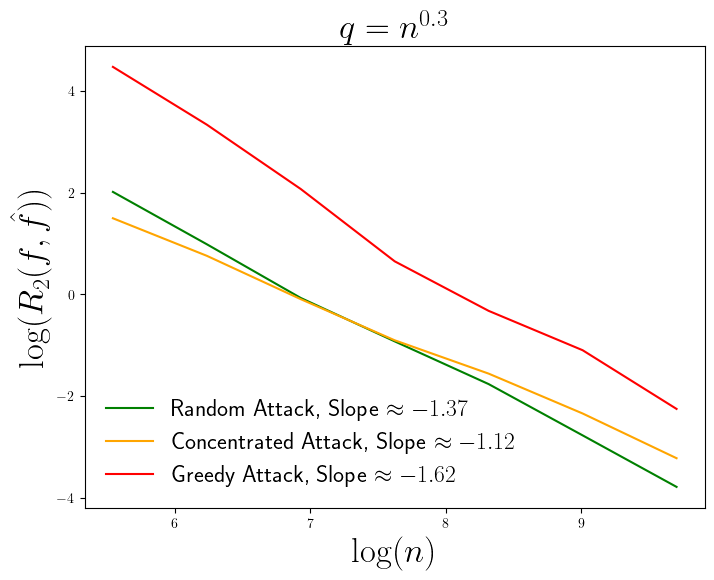}
    % \caption{}
  \end{subfigure}
  \hfill
  \begin{subfigure}[b]{0.49\textwidth}
    \includegraphics[width=1\linewidth]{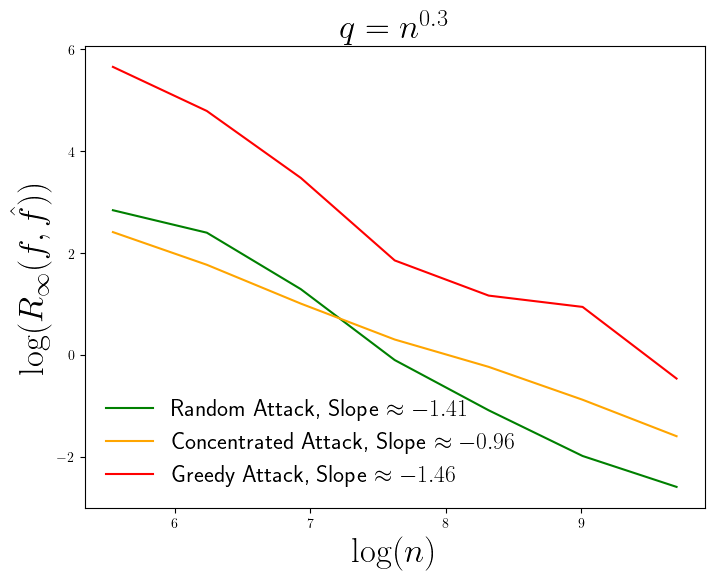}
    % \caption{}
  \end{subfigure}
  
  \vspace{0.1cm}
  
  \begin{subfigure}[b]{0.49\textwidth}
    \includegraphics[width=1\linewidth]{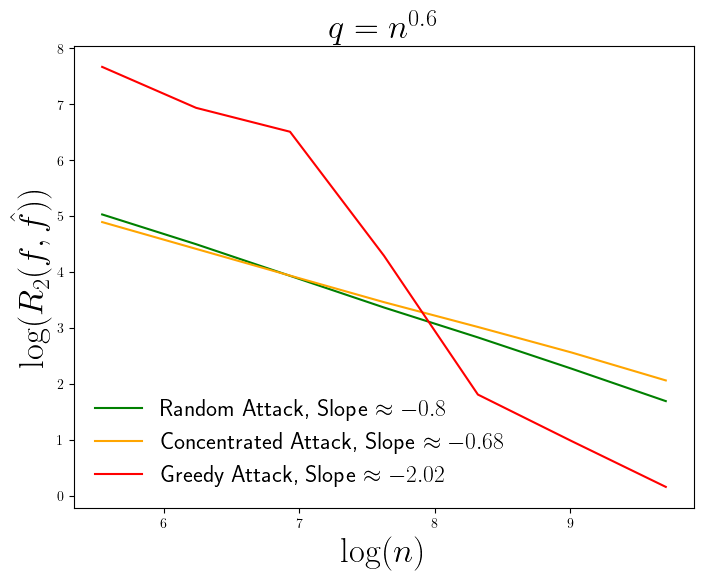}
    % \caption{}
  \end{subfigure}
  \hfill
  \begin{subfigure}[b]{0.49\textwidth}
    \includegraphics[width=1\linewidth]{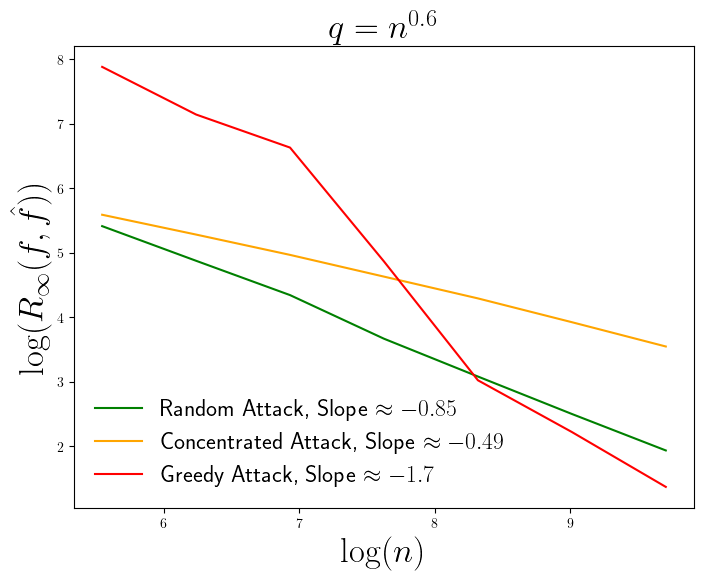}
    % \caption{}
  \end{subfigure}
  \caption{Log-log plots showing the convergence rate of the cubic smoothing spline estimator $\hat{f} = \hat{f}^a_{\mathrm{SS}}$ for $f(x) = x\sin(x)$ under a Gaussian design. The top row plots are results for $q = n^{0.3}$, with theoretical rates of $\mathcal{O}(n^{-0.8})$ for $R_2(f, \hat{f})$ and $\mathcal{O}(n^{-0.6})$ for $R_\infty(f, \hat{f})$. The bottom row corresponds to a higher corruption level, $q = n^{0.6}$, with respective theoretical upper bounds of $\mathcal{O}(n^{-0.53})$ and $\mathcal{O}(n^{-0.48})$.}
  \label{fig:conv_rate_gauss}
\end{figure}

\begin{figure}[h]
  \centering
  \begin{subfigure}[b]{0.49\textwidth}
    \includegraphics[width=1\linewidth]{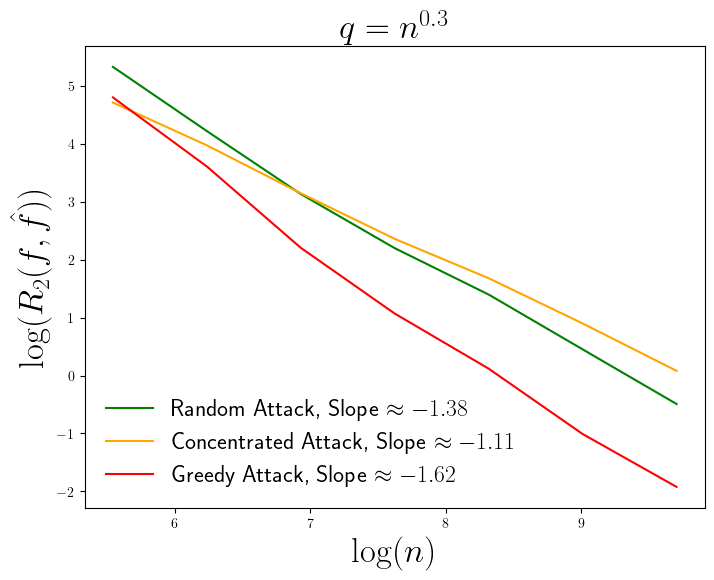}
    % \caption{}
  \end{subfigure}
  \hfill
  \begin{subfigure}[b]{0.49\textwidth}
    \includegraphics[width=1\linewidth]{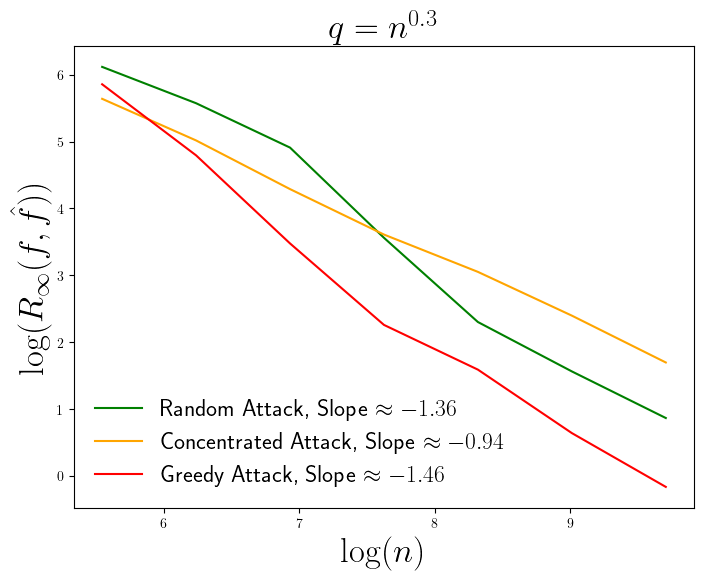}
    % \caption{}
  \end{subfigure}
  
  \vspace{0.1cm}
  
  \begin{subfigure}[b]{0.49\textwidth}
    \includegraphics[width=1\linewidth]{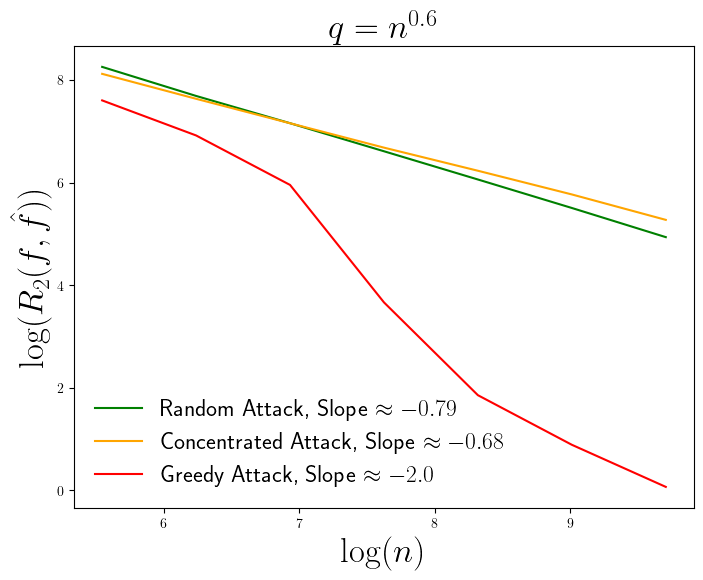}
    % \caption{}
  \end{subfigure}
  \hfill
  \begin{subfigure}[b]{0.49\textwidth}
    \includegraphics[width=1\linewidth]{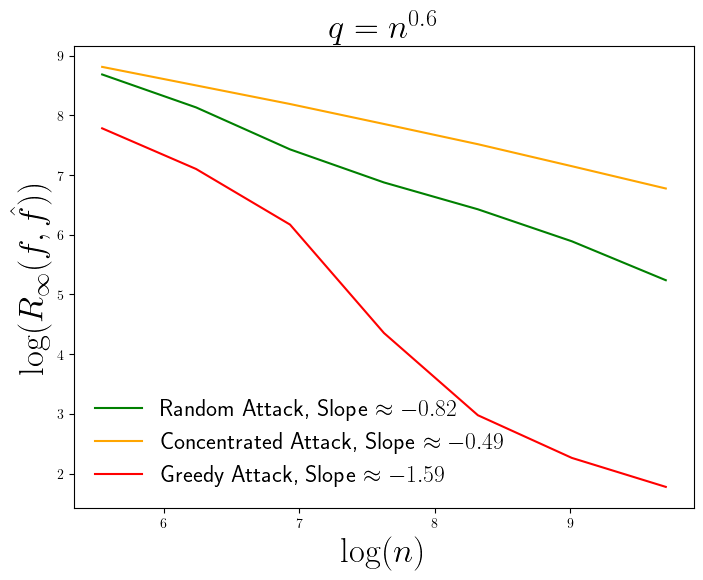}
    % \caption{}
  \end{subfigure}
  \caption{Log–log plots showing the convergence behavior of the cubic smoothing spline estimator \( \hat{f} = \hat{f}^{\,a}_{\mathrm{SS}} \) when the ground-truth function is an MLP, under the Gaussian design. 
The top row corresponds to the case \( q = n^{0.3} \), with theoretical convergence rates of \( \mathcal{O}(n^{-0.8}) \) for \( R_2(f, \hat{f}) \) and \( \mathcal{O}(n^{-0.6}) \) for \( R_\infty(f, \hat{f}) \). 
The bottom row shows results for a higher corruption level, \( q = n^{0.6} \), with respective theoretical upper bounds of \( \mathcal{O}(n^{-0.53}) \) and \( \mathcal{O}(n^{-0.48}) \).}
  \label{fig:conv_rate_gauss_nn}
\end{figure}
\end{document}